\newcommand{\methodfull}{\textsc{Brain Network Transformer}\xspace}
\newcommand{\methodtable}{\textsc{BrainNetTF}\xspace}
\newcommand{\pooling}{\textsc{Orthonormal Clustering Readout}\xspace}
\newcommand{\poolingshort}{\textsc{OCRead}\xspace}
\theoremstyle{definition}
\newtheorem{definition}{Definition}[section]
\theoremstyle{plain}
\newtheorem{theorem}[definition]{Theorem}
\theoremstyle{remark}
\newtheorem*{remark}{Remark}
\title{\methodfull}
\author{%
Xuan Kan$^1$ \quad Wei Dai$^2$ \quad Hejie Cui$^1$ \quad Zilong Zhang$^3$ \quad Ying Guo$^1$ \quad Carl Yang$^1$\\
$^1$Emory University \quad $^2$Stanford University \quad $^3$University of International Business and Economics\\
\texttt{\{xuan.kan,hejie.cui,yguo2,j.carlyang\}@emory.edu}\\
\texttt{dvd.ai@stanford.edu} \quad \texttt{201957020@uibe.edu.cn}
}
\begin{document}
	\maketitle
	\begin{abstract}
	Human brains are commonly modeled as networks of Regions of Interest (ROIs) and their connections for the understanding of brain functions and mental disorders. Recently, Transformer-based models have been studied over different types of data, including graphs, shown to bring performance gains widely. In this work, we study Transformer-based models for brain network analysis. Driven by the unique properties of data, we model brain networks as graphs with nodes of fixed size and order, which allows us to (1) use connection profiles as node features to provide natural and low-cost positional information and (2) learn pair-wise connection strengths among ROIs with efficient attention weights across individuals that are predictive towards downstream analysis tasks. Moreover, we propose an \textsc{Orthonormal Clustering Readout} operation based on self-supervised soft clustering and orthonormal projection. This design accounts for the underlying functional modules that determine similar behaviors among groups of ROIs, leading to distinguishable cluster-aware node embeddings and informative graph embeddings. Finally, we re-standardize the evaluation pipeline on the only one publicly available large-scale brain network dataset of ABIDE, to enable meaningful comparison of different models. Experiment results show clear improvements of our proposed \textsc{Brain Network Transformer} on both the public ABIDE and our restricted ABCD datasets. The implementation is available at \url{https://github.com/Wayfear/BrainNetworkTransformer}.
	\end{abstract}

	\section{Introduction}
Brain network analysis has been an intriguing pursuit for neuroscientists to understand human brain organizations and predict clinical outcomes~\citep{genderfunction, wangfilter, hierarchicalyin, brainnetworks, concepts,guo2008unified, shi2016,higgins2018integrative,hierarchicalyin,higgins2019difference,kundu2019novel,lukemire2021bayesian,hu2022multimodal}. Among various neuroimaging modalities, functional Magnetic Resonance Imaging (fMRI) is one of the most commonly used for brain network construction, where the nodes are defined as Regions of Interest (ROIs) given an atlas, and the edges are calculated as pairwise correlations between the blood-oxygen-level-dependent (BOLD) signal series extracted from each region~\citep{modellingfmri, simpson2013analyzing,wangfilter,dai2017predicting}. Researchers observe that some regions can co-activate or co-deactivate simultaneously when performing cognitive-related tasks such as action, language, and vision. Based on this pattern, brain regions can be classified into diverse functional modules to analyze diseases towards their diagnosis, progress understanding and treatment. 

Nowadays Transformer-based models have led a tremendous success in various downstream tasks across fields including natural language processing~\citep{NIPS2017_3f5ee243, DBLP:conf/acl/DaiYYCLS19} and computer vision~\citep{dosovitskiy2021an,chu2021Twins,tu2022maxvit}. Recent efforts have also emerged to apply Transformer-based designs to graph representation learning. GAT~\citep{DBLP:conf/iclr/VelickovicCCRLB18} firstly adapts the attention mechanism to graph neural networks (GNNs) but only considers the local structures of neighboring nodes. Graph Transformer~\citep{graphtransformer_aaai} injects edge information into the attention mechanism and leverages the eigenvectors of each node as positional embeddings. SAN~\citep{san} further enhances the positional embeddings by considering both eigenvalues and eigenvectors and improves the attention mechanism by extending the attention from local to global structures.
Graphomer~\citep{graphormer}, which achieves the first place on the quantum prediction track of OGB Large-Scale Challenge~\citep{hu2020open}, designs unique mechanisms for molecule graphs such as centrality encoding to enhance node features and spatial/edge encoding to adapt attention scores. 

However, brain networks have several unique traits that make directly applying existing graph Transformer models impractical. First, one of the simplest and most frequently used methods to construct a brain network in the neuroimaging community is via pairwise correlations between BOLD time courses from two ROIs~\citep{li2020braingnn, kan2022fbnetgen, braingb, yang2022data, zhu2022joint}. This impedes the designs like centrality, spatial, and edge encoding because each node in the brain network has the same degree and connects to every other node by a single hop. 
Second, in previous graph transformer models, eigenvalues and eigenvectors are commonly used as positional embeddings because they can provide identity and positional information for each node~\cite{cui2021positional,9361263}. Nevertheless, in brain networks, the connection profile, which is defined as each node's corresponding row in the brain network adjacency matrix, is recognized as the most effective node feature~\cite{braingb}. This node feature naturally encodes both structural and positional information, making the aforementioned positional embedding design based on eigenvalues and eigenvectors redundant. The third challenge is scalability. Typically, the numbers of nodes and edges in molecule graphs are less than 50 and 2500, respectively. However, for brain networks, the node number is generally around 100 to 400, while the edge number can be up to 160,000. Therefore, operations like the generation of all edge features in existing graph transformer models can be time-consuming, if not infeasible.

\begin{figure*}[h]
    \centering
    \includegraphics[width=0.8\linewidth]{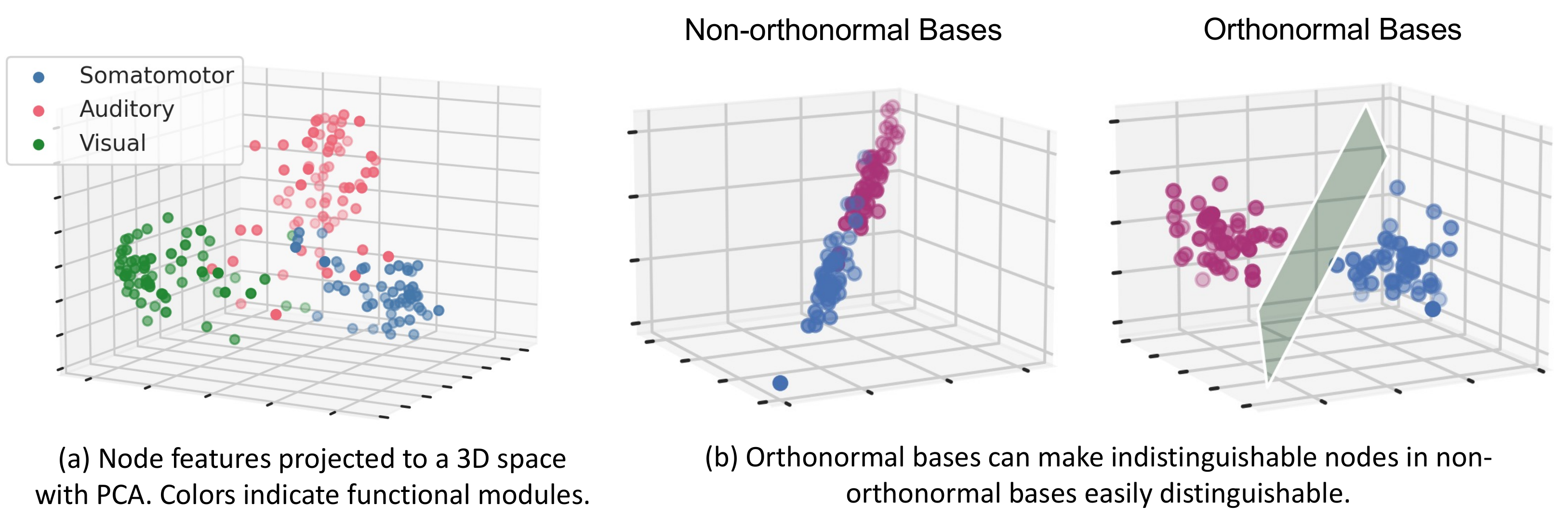}
    \caption{Illustration of the motivations behind \pooling.}
    \label{fig:motivation}
\end{figure*}

In this work, we propose to develop \methodfull (\methodtable), which leverages the unique properties of brain network data to fully unleash the power of Transformer-based models for brain network analysis. Specifically, motivated by previous findings on effective GNN designs for brain networks~\citep{braingb}, we propose to use the effective initial node features of connection profiles. Empirical analysis shows that connection profiles naturally provide positional features for Transformer-based models and avoid the costly computations of eigenvalues or eigenvectors. Moreover, recent work demonstrates that GNNs trained on learnable graph structures can achieve superior effectiveness and explainability~\citep{kan2022fbnetgen}. Inspired by this insight, we propose to learn fully pairwise attention weights with Transformer-based models, which resembles the process of learning predictive brain network structures towards downstream tasks.
 
One step further, when GNNs are used for brain network analysis, a graph-level embedding needs to be generated through a readout function based on the learned node embeddings~\cite{BrainNetCNN, li2020braingnn, braingb}. As is shown in Figure \ref{fig:motivation}(a), a property of brain networks is that brain regions (nodes) belonging to the same functional modules often share similar behaviors regarding activations and deactivations in response to various stimulations~\cite{functionalmodule}. Unfortunately, the current labeling of functional modules is rather empirical and far from accurate. For example,~\cite{akiki2019determining} provides more than 100 different functional module organizations based on hierarchical clustering. In order to leverage the natural functions of brain regions without the limitation of inaccurate functional module labels, we design a new global pooling operator, \pooling, where the graph-level embeddings are pooled from clusters of functionally similar nodes through soft clustering with orthonormal projection.  
Specifically, we first devise a self-supervised mechanism based on~\cite{xie2016unsupervised} to jointly assign soft clusters to brain regions while learning their individual embeddings. To further facilitate the learning of clusters and embeddings, we design an orthonormal projection and theoretically prove its effectiveness in distinguishing embeddings across clusters, thus obtaining expressive graph-level embeddings after the global pooling, as illustrated in Figure \ref{fig:motivation}(b).

Finally, the lack of open-access datasets has been a non-negligible challenge for brain network analysis. The strict access restrictions and complicated extraction/preprocessing of brain networks from fMRI data limit the development of machine learning models for brain network analysis. Specifically, among all the large-scale publicly available fMRI datasets in literature, ABIDE~\citep{abide} is the only one provided with extracted brain networks fully accessible without permission requirements. However, ABIDE is aggregated from 17 international sites with different scanners and acquisition parameters. This inter-site variability conceals inter-group differences that are really meaningful, which is reflected in the unstable training performance and the significant gap between validation and testing performance in practice. To address these limitations, we propose to apply a stratified sampling method in the dataset splitting process and standardize a fair evaluation pipeline for meaningful model comparison on the ABIDE dataset. Our extensive experiments on this public ABIDE dataset and a restricted ABCD dataset~\cite{ABCD} show significant improvements brought by our proposed \methodfull.
	\section{Background and Related Work}

\subsection{GNNs for Brain Network Analysis}
Recently, emerging attention has been devoted to the generalization of GNN-based models to brain network analysis \citep{DBLP:conf/miccai/LiDZZVD19, ahmedt2021graph}. GroupINN \citep{groupinn2019} utilizes a grouping-based layer to provide explainability and reduce the model size. BrainGNN \citep{li2020braingnn} designs the ROI-aware GNNs to leverage the functional information in brain networks and uses a special pooling operator to select these crucial nodes. IBGNN \citep{cui2022interpretable} proposes an interpretable framework to analyze disorder-specific ROIs and prominent connections. 
In addition, FBNetGen \citep{kan2022fbnetgen} considers the learnable generation of brain networks and explores the explainability of the generated brain networks towards downstream tasks. Another benchmark paper \citep{braingb} systematically studies the effectiveness of various GNN designs over brain network data. Different from other work focusing on static brain networks, STAGIN \citep{NEURIPS2021_22785dd2} utilizes GNNs with spatio-temporal attention to model dynamic brain networks extracted from fMRI data. 

\subsection{Graph Transformer}
Graph Transformer raises many researchers' interest currently due to its outstanding performance in graph representation learning. Graph Transformer \citep{graphtransformer_aaai} firstly injects edge information into the attention mechanism and leverages the eigenvectors as positional embeddings. SAN \citep{san} enhances the positional embeddings and improves the attention mechanism by emphasizing neighbor nodes while incorporating the global information. Graphomer \citep{graphormer} designs unique mechanisms for molecule graphs and achieves the SOTA performance. Besides, a fine-grained attention mechanism is developed for node classification \citep{jiananzhao}. Also, the Transformer is extended to larger-scale heterogeneous graphs with a particular sampling algorithm in HGT \citep{hu2020heterogeneous}. EGT \citep{hussain2021edge} further employs edge augmentation to assist global self-attention. In addition, LSPE \citep{dwivedi2022graph} leverages the learnable structural and positional encoding to improve GNNs' representation power, and GRPE \citep{https://doi.org/10.48550/arxiv.2201.12787} enhances the design of encoding node relative position information in Transformer. 
	\section{\methodfull}

\begin{figure*}[ht]
    \centering
    \includegraphics[width=0.8\linewidth]{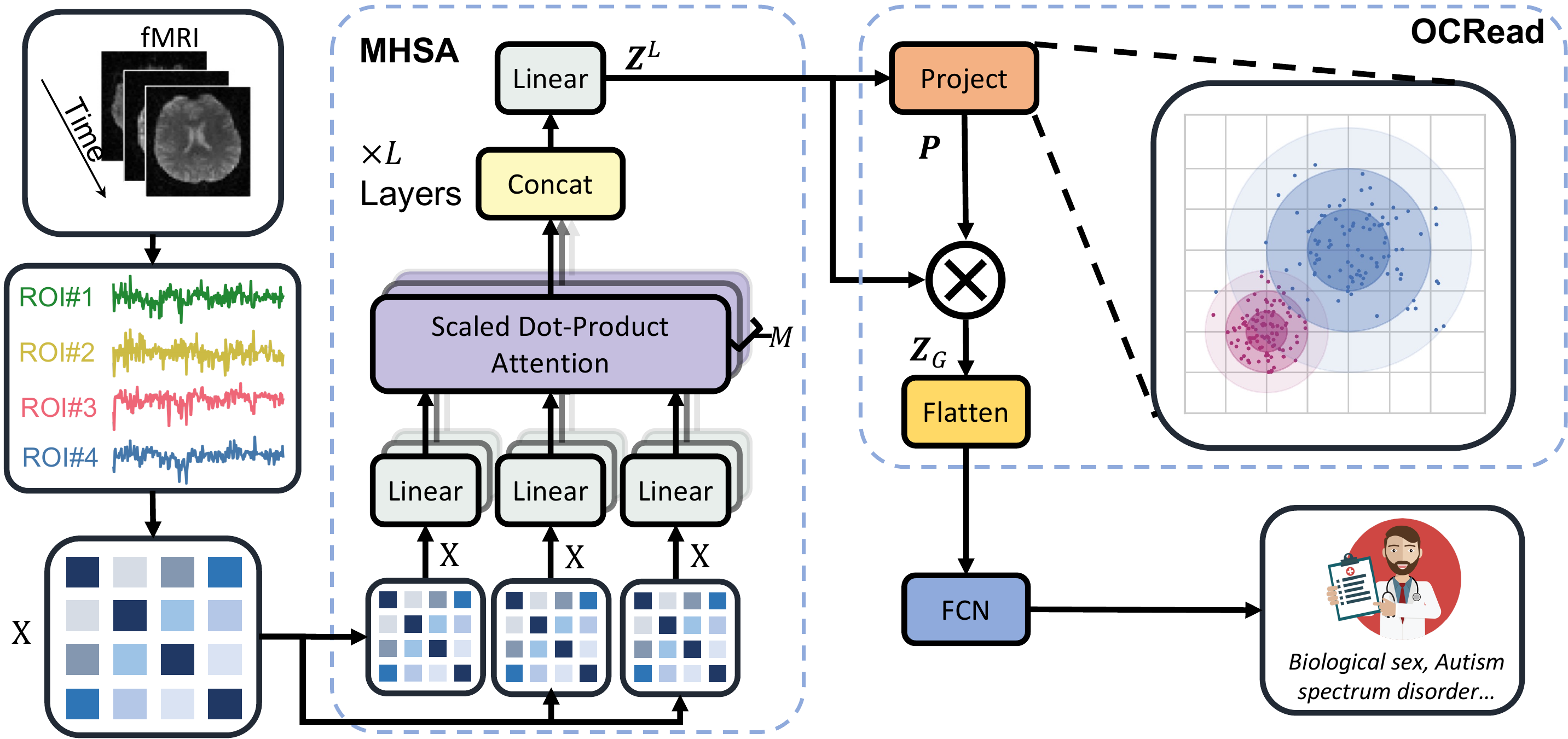}
    \caption{The overall framework of our proposed \methodfull.}
    \label{fig:model}
\end{figure*}

\subsection{Problem Definition}
In brain network analysis, given a brain network $\bm X\in \mathbb{R}^{V \times V}$, where $V$ is the number of nodes (ROIs), the model aims to make a prediction indicating biological sex, presence of a disease or other properties of the brain subject. The overall framework of our proposed \methodfull is shown in Figure \ref{fig:model}, which is mainly composed of two components, an $L$-layer attention module $\operatorname{MHSA}$ and a graph pooling operator \poolingshort.
Specifically, in the first component of $\operatorname{MHSA}$, the model learns attention-enhanced node features $\bm Z^L$ through a non-linear mapping $\bm X \rightarrow \bm Z^L \in \mathbb{R}^{V \times V}$. Then the second component of \poolingshort compresses the enhanced node embeddings $\bm Z^L$ to graph-level embeddings $\bm Z_G \in \mathbb{R}^{K \times V}$, where $K$ is a hyperparameter representing the number of clusters. $\bm Z_G$ is then flattened and passed to a multi-layer perceptron for graph-level predictions. The whole training process is supervised with the cross-entropy loss.

\subsection{Multi-Head Self-Attention Module (MHSA)}
To develop a powerful Transformer-based model suitable for brain networks, two fundamental designs, the positional embedding and attention mechanism, need to be reconsidered to fit the natural properties of brain network data. In existing graph transformer models, the positional information is usually encoded via eigendecomposition, while the attention mechanism often combines node positions with existing edges to calculate the attention scores. However, for the dense (often fully connected) graphs of brain networks, eigendecomposition is rather costly, and the existence of edges is hardly informative.

ROI node features on brain networks naturally contain sufficient positional information, making the positional embeddings based on eigendecomposition redundant. Previous work on brain network analysis has shown that the connection profile $\bm X_{i\cdot}$ for node $i$, defined as the corresponding row for each node in the edge weight matrix $\bm X$, always achieves superior performance over others such as node identities, degrees or eigenvector-based embeddings \citep{li2020braingnn, kan2022fbnetgen, braingb}. With this node feature initialization, the self-connection weight $\bm x_{ii}$ on the diagonal is always equal to one, which encodes sufficient information to determine the position of each node in a fully connected graph based on the given brain atlas.
To verify this insight, we also empirically compare the performance of the original connection profile with two variants concatenated with additional positional information, \textit{i.e.}, connection profile w/ identity feature and connection profile w/ eigen feature. The results indeed show no benefit brought by the additional computations (c.f.~Appendix \ref{app:node_feature}).
As for the attention mechanism, previous work \citep{braingb} has empirically demonstrated that integrating edge weights into the attention score calculation can significantly degrade the effectiveness of attention on complete graphs, while the generation of edge-wise embedding can be unaffordable given a large number of edges in brain networks. On the other hand, the existence of edges provides no useful information for the computation of attention scores as well because all edges simply exist in complete graphs.

Based on the observations above, we design the basic \methodfull by (1) adopting the connection profile as initial node features and eliminating any extra positional embeddings and (2) adopting the vanilla pair-wise attention mechanism without using edge weights or relative position information to learn a singular attention score for each edge in the complete graph.

Formally, we leverage a $L$-layer non-linear mapping module, namely Multi-Head Self-Attention ($\operatorname{MHSA}$), to generate more expressive node features $\bm Z^L = \operatorname{MHSA}(\bm X) \in \mathbb{R}^{V \times V}$. For each layer $l$, the output $\bm Z^{l}$ is obtained by
\begin{equation}
\bm Z^{l} = (\Vert_{m=1}^{M}{\bm h^{l,m}})\bm W^l_{\mathcal{O}},
\text {} \bm h^{l,m} =\operatorname{Softmax}\left(\frac{\bm W_{\mathcal{Q}}^{l, m} \bm Z^{l-1} (\bm W_{\mathcal{K}}^{l, m} \bm Z^{l-1})^{\top}}{\sqrt{d_{\mathcal{K}}^{l, m}}}\right) \bm W_{\mathcal{V}}^{l, m} \bm Z^{l-1},
\end{equation}
where $\bm Z^0 = \bm X$, $\Vert$ is the concatenation operator, $M$ is the number of heads, $l$ is the layer index, $\bm W^l_{\mathcal{O}}, \bm W_{\mathcal{Q}}^{l, m}, \bm W_{\mathcal{K}}^{l, m}$, $\bm W_{\mathcal{V}}^{l, m}$ are learnable model parameters, and $d_{\mathcal{K}}^{l, m}$ is the first dimension of $\bm W_{\mathcal{K}}^{l, m}$.

\subsection{\pooling (\poolingshort)}

The readout function is an essential component to learn the graph-level representations for brain network analysis (\textit{e.g.}, classification), which maps a set of learned node-level embeddings to a graph-level embedding. $\operatorname{Mean}(\cdot), \operatorname{Sum}(\cdot)$ and $\operatorname{Max}(\cdot)$ are the most commonly used readout functions for GNNs. Xu et al. \citep{xu_sum} show that GNNs equipped with $\operatorname{Sum}(\cdot)$ readout have the same discriminative power as the Weisfeiler-Lehman Test. Zhang et al. \citep{zhang2018end} propose a sort pooling to generate the graph-level representation by sorting the final node representations. Ju et al. \citep{liangzhao} present a layer-wise readout by extending the node information aggregated from the last layer of GNNs to all layers. However, none of the existing readout functions leverages the properties of brain networks that nodes in the same functional modules tend to have similar behaviors and clustered representations, as shown in Figure \ref{fig:motivation}(a). To address this deficiency, we design a novel readout function to take advantage of the modular-level similarities between ROIs in brain networks, where nodes are assigned softly to well-chosen clusters with an unsupervised process. 

Formally, given $K$ cluster centers, each center has $V$ dimensions, $\bm E\in \mathbb{R}^{K \times V }$, a Softmax projection operator is used as the function to calculate the probability $\bm P_{ik}$ of assigning node $i$ to cluster $k$, 
\begin{equation}
\label{equ:project}
\bm P_{ik} =\frac{e^{\langle \bm Z_{i\cdot}^L, \bm E_{k\cdot}\rangle} }{\sum_{k^{\prime}}^K e^{\langle \bm Z_{i\cdot}^L, \bm E_{k^{\prime}\cdot}\rangle}}, 
\end{equation}
where $\langle \cdot, \cdot\rangle$ denotes the inner product and $\bm Z^L$ is the learned set of node embeddings from the last layer of $\operatorname{MHSA}$ module. With this computed soft assignment $\bm P\in \mathbb{R}^{V \times K }$, the original learned node representation $\bm Z^L$ can be aggregated under the guidance of the soft cluster information, where the graph-level embedding $\bm Z_G$ is obtained by $\bm Z_G=\bm P^\top \bm Z^L$.

However, jointly learning node embeddings and clusters without ground-truth cluster labels is difficult. To obtain representative soft assignment $\bm P$, the initialization of $K$ cluster centers $\bm E$ is critical and should be designed delicately. To this end, we leverage the observation illustrated in Figure \ref{fig:motivation}(b), where orthonormal embeddings can improve the clustering of nodes in brain networks \textit{w.r.t.}~the functional modules underlying brain regions.

\textbf{Orthonormal Initialization}. To initialize a group of orthonormal bases as cluster centers, we first adopt the Xavier uniform initialization~\citep{glorot2010understanding} to initialize $K$ random centers and each center contains $V$ dimensions $\bm C\in \mathbb{R}^{K \times V }$. Then, we apply the Gram-Schmidt process to obtain the orthonormal bases $\bm E$, where
\begin{equation}
\label{equ:otho}
\bm u_{k}=\bm C_{k\cdot}-\sum_{j=1}^{k-1} \frac{\langle \bm u_{j}, \bm C_{k\cdot}\rangle }{\langle \bm u_{j},\bm u_{j}\rangle}\bm u_{j}, \quad \bm E_{k\cdot}=\frac{\bm u_{k}}{\left\|\bm u_{k}\right\|}.
\end{equation}

In the next section, we theoretically prove the advantage of this orthonormal initialization.

\subsubsection{Theoretical Justifications} \label{sec:theoretical}

In \poolingshort, proper cluster centers can generate higher-quality soft assignments and enlarge the difference between $\bm P$ from different classes. \citep{Saxe2014ExactST, haitao} showed the advantages of orthogonal initialization in DNN model parameters. However, none of them proves whether it is an ideal strategy to obtain the cluster centers. We propose two methods from the perspective of statistics as follows. 

Firstly, to discern features of different nodes, we would expect a larger discrepancy among their similarity probabilities indicated from the readout. One way to measure the discrepancy is using the \emph{variance} of $\bm P$ for each feature. Let $\bar{\bm P} \equiv 1/K$ denote the mean of any discrete probabilities with $K$ values. Variance of $\bm P$ measures the difference between $\bm P$ and $\bar{\bm P}$. We average over the feature vector space: if the result is small, then there is a large tendency that different $\bm P$ approaches $\bar{\bm P}$ and hence cannot be discerned easily. Specifically, the following theorem holds for our function Eq.~\eqref{equ:project}:
\begin{theorem} \label{Thm3.1}
	For arbitrary $r > 0$, let $B_r = \{\mathcal{\bm Z} \in \mathbb{R}^V; \Vert \mathcal{\bm Z} \Vert \leq r\}$ denote the round ball centered at origin of radius $r$ with $\mathcal{\bm Z}$ being fracture vectors. Let $V_r$ be the volume of $B_r$. The variance of Softmax projection averaged over $B_r$
	\begin{align}
		\frac{1}{V_r} \int_{B_r} \sum_k^K \Big( \frac{e^{\langle \mathcal{\bm Z}, \bm E_{k\cdot}\rangle} }{\sum_{k^{\prime}}^K e^{\langle \mathcal{\bm Z}, \bm E_{k^{\prime}\cdot}\rangle}} - \frac{1}{K} \Big)^2 d \mathcal{\bm Z},
	\end{align}
    attains maximum when $\bm E$ is orthonormal.
\end{theorem}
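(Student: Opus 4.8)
\medskip
\noindent\textit{Sketch of a proof (plan).} The idea is to reduce the averaged variance to a quantity depending on $\bm E$ only through its Gram matrix, extract the dominant part by a symmetry-aware expansion, and then solve the resulting optimization over unit-norm rows.

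\emph{Reducing the integrand and exploiting symmetry.} Write $q_k(\bm Z) = e^{\langle \bm Z, \bm E_{k\cdot}\rangle}\big/\sum_{k'}^K e^{\langle \bm Z, \bm E_{k'\cdot}\rangle}$. Since $\sum_k q_k(\bm Z)=1$, one has pointwise $\sum_k\big(q_k(\bm Z)-\tfrac1K\big)^2 = \sum_k q_k(\bm Z)^2 - \tfrac1K$, so maximizing the averaged variance is equivalent to maximizing $G(\bm E):=\frac{1}{V_r}\int_{B_r}\sum_k q_k(\bm Z)^2\,d\bm Z$ over admissible $\bm E$ (I take the constraint to be that each row is a unit vector, which is what orthonormality requires, with $K\le V$). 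Each $q_k(\bm Z)$ depends on $\bm Z$ only through the inner products $\langle\bm Z,\bm E_{k\cdot}\rangle$; since $B_r$ and Lebesgue measure are $O(V)$-invariant and two configurations with the same Gram matrix $\bm M=\bm E\bm E^\top$ are related by $\bm E\mapsto \bm E Q$ with $Q\in O(V)$, the value $G(\bm E)$ depends on $\bm E$ only through $\bm M$. Central symmetry of $B_r$ also kills the odd-order terms in any expansion of the integrand about $\bm Z=0$.

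\emph{Extracting the leading term.} At $\bm Z=0$ we have $q_k=\tfrac1K$, and using $\partial q_k/\partial a_j = q_k(\delta_{kj}-q_j)$ with $a_j=\langle\bm Z,\bm E_{j\cdot}\rangle$ gives $q_k-\tfrac1K=\tfrac1K(a_k-\bar a)+O(\|\bm Z\|^2)$ with $\bar a:=\tfrac1K\sum_j a_j$, hence
\begin{align}
	\sum_k\Big(q_k(\bm Z)-\tfrac1K\Big)^2
	&= \frac{1}{K^2}\sum_k (a_k-\bar a)^2 + O(\|\bm Z\|^3) \notag\\
	&= \frac{1}{K^2}\,\bm Z^\top \bm S\,\bm Z + O(\|\bm Z\|^3),
\end{align}
where $\bm S = \sum_k (\bm E_{k\cdot}-\bar{\bm E})(\bm E_{k\cdot}-\bar{\bm E})^\top$ and $\bar{\bm E}=\tfrac1K\sum_k\bm E_{k\cdot}$. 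Averaging over $B_r$ with $\frac{1}{V_r}\int_{B_r}\bm Z\bm Z^\top\,d\bm Z=\frac{r^2}{V+2}I_V$ (and noting the cubic remainder integrates to $O(r^4)$ by central symmetry), the leading contribution to the averaged variance is $\frac{r^2}{K^2(V+2)}\operatorname{tr}(\bm S)$, with $\operatorname{tr}(\bm S)=\sum_k\|\bm E_{k\cdot}\|^2-\tfrac1K\big\|\sum_k\bm E_{k\cdot}\big\|^2 = K-1-\tfrac1K\sum_{k\ne k'}\langle\bm E_{k\cdot},\bm E_{k'\cdot}\rangle$ under the unit-norm constraint. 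Orthonormality makes every cross term vanish, and I would then argue it is the extremizing Gram matrix for this expression; checking that the higher-order corrections do not reverse the comparison completes the argument.

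\emph{Main obstacle.} The crux is the final optimization step together with the transfer from the quadratic approximation to the exact integral: one must (i) control the $O(\|\bm Z\|^3)$ and higher remainder uniformly over the compact set of unit-norm configurations — or restrict attention to small $r$ — so that the quadratic term is decisive, and (ii) identify the precise feasible set on which an orthonormal $\bm E$ is genuinely the \emph{global} maximizer rather than merely a critical configuration, since the cross-term expression above shows the problem is a constrained quadratic program in the entries of $\bm M$ whose solution is sensitive to exactly which constraints (unit diagonal, rank $\le V$, sign restrictions) are imposed. A more computational but remainder-free alternative is to evaluate $G(\bm E)$ in closed form as a function of $\bm M$ via the one-dimensional marginals of $\langle\bm Z,\bm E_{k\cdot}\rangle$ under the uniform law on $B_r$, and optimize that directly.
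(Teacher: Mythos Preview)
Your approach differs substantially from the paper's: the paper works directly with the exact integral in polar coordinates for the two-dimensional, two-center case, proves periodicity and a reflection symmetry of the integrand in the angular variable, and then establishes monotonicity of the averaged variance in the angle $\phi$ between the two centers by computing a directional derivative along $(\theta,\phi)\mapsto(\theta+t,\phi+2t)$ and checking its sign on a suitably shifted integration domain; no Taylor truncation is used.

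There is, however, a genuine gap in your plan at precisely the point you flag as the main obstacle. The leading term you extract, $\operatorname{tr}(\bm S)=K-K\lVert\bar{\bm E}\rVert^2$ under the unit-norm constraint, is \emph{not} maximized at orthonormality. It is maximized whenever $\bar{\bm E}=0$, i.e.\ when the centers sum to zero, giving $\operatorname{tr}(\bm S)=K$; an orthonormal system has $\bar{\bm E}\ne 0$ and only $\operatorname{tr}(\bm S)=K-1$. For $K=2$ this already says antipodal centers beat orthogonal ones at leading order, and the ordering persists for the exact integral: with $a_j=\langle\bm Z,\bm E_{j\cdot}\rangle$ the pointwise variance is $\tfrac12\tanh^2\bigl(\tfrac12(a_1-a_2)\bigr)=\tfrac12\tanh^2\bigl(\tfrac12\langle\bm Z,\bm E_{1\cdot}-\bm E_{2\cdot}\rangle\bigr)$, which by rotational invariance of $B_r$ averages to a strictly increasing function of $\lVert\bm E_{1\cdot}-\bm E_{2\cdot}\rVert$, and that norm is larger at angle $\pi$ than at $\pi/2$. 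Hence the quadratic germ cannot be decisive in the direction you want, and the remainder cannot be treated as a perturbation that merely preserves the leading-order ranking. The paper's monotonicity argument is carried out for $\phi\in[0,\pi/2]$, so the comparison is effectively over configurations with nonnegative pairwise inner products, and orthonormality is the extreme point of that set; to rescue your strategy you would need to impose and justify the same implicit constraint and then compare the full integrals---not their Taylor truncations---on that restricted feasible set.
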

Despite the concise form, it is unclear whether the above integral has an elementary antiderivative. Even though, we can circumvent this problem and a rigorous proof is given in Appendix \ref{app:prove}.

The second statistical method shows that for general readout functions without a known analytical form, initializing with orthonormal cluster centers has a larger probability of gaining better performance. To set up the proper statistical scenario, we assume that the unknown readout is obtained by a regression of some samples $(\hat{\bm Z}^{(s)}, \hat{\bm E}^{(t)}, \hat{\bm P}^{(st)})$. This formally converts the exact functional relationship between $\bm Z_{i\cdot}, \bm E_{k\cdot}$ and $\bm P_{ik}$ to a \emph{statistical relationship}:
\begin{align}
	\bm P_T(\bm Z_{i\cdot}, \bm E_{k\cdot}) = \bm P(\bm Z_{i\cdot}, \bm E_{k\cdot}) + \epsilon_i, \quad \epsilon_i \sim N(0,\sigma^2), \quad  E(\epsilon_i) = 0,  \quad  D(\epsilon_i) = \sigma^2, 
\end{align}
with $\bm P_T$ being the probability \emph{truly} reflecting similarities between nodes and clusters and $\epsilon_i$ denoting the stochastic error. It is almost impossible to find $\bm P_T$, but by computing the so-called \emph{variation inflation factor} \cite{Kutner1985}, we show that regression in orthonormal case has a higher accuracy than that in non-orthonormal case. Combining with a hypothesis testing, we obtain the following 
\begin{theorem} \label{Thm3.2}
	The significance level $\alpha_{E_{k\cdot}}$ which reveals the probability of rejecting a well-estimated pooling is lower when sampling from orthonormal centers than that from non-orthonormal centers.
\end{theorem}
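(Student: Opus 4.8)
The plan is to reduce Theorem~\ref{Thm3.2} to a bound on the variance of an ordinary least squares estimate via the variation inflation factor (VIF), and then to push that bound through a simple hypothesis test. First I would make the regression scenario of the previous paragraph explicit: stacking the sampled relations $(\hat{\bm Z}^{(s)}, \hat{\bm E}^{(t)}, \hat{\bm P}^{(st)})$ yields a linear-in-parameters model $\bm P_T = \bm D\bm\beta + \bm\epsilon$, $\bm\epsilon \sim N(\bm 0, \sigma^2\bm I)$, whose design matrix $\bm D$ carries a block of columns indexed by the cluster centers, with $\beta_{E_{k\cdot}}$ the coefficient whose estimate controls whether the fitted \poolingshort agrees with $\bm P_T$ along cluster $k$.

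Second, I would invoke the textbook identity (Kutner et al.) $\mathrm{Var}(\hat\beta_{E_{k\cdot}}) = \sigma^2\,\mathrm{VIF}_{E_{k\cdot}}/S_k^2$, where $S_k^2$ is the sum of squares of the $E_{k\cdot}$-column (fixed across configurations by the unit-norm step of Eq.~\eqref{equ:otho}) and $\mathrm{VIF}_{E_{k\cdot}} = (1-R^2_{E_{k\cdot}})^{-1}$ with $R^2_{E_{k\cdot}}$ the coefficient of determination of the regression of that column on the others. The crux is the elementary fact that $R^2_{E_{k\cdot}} = 0$, i.e.\ $\mathrm{VIF}_{E_{k\cdot}}$ attains its floor $1$, exactly when the $E_{k\cdot}$-column is orthogonal to the span of the remaining columns; the Gram--Schmidt construction of Eq.~\eqref{equ:otho} secures this for every $k$ at once, while any family of centers that is not mutually orthogonal forces $R^2_{E_{k\cdot}} > 0$, hence $\mathrm{VIF}_{E_{k\cdot}} > 1$, for some $k$. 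Thus $\mathrm{Var}(\hat\beta_{E_{k\cdot}})$ is minimized in the orthonormal case.

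Third, I would tie the variance to the significance level. The test under consideration rejects ``the pooling is well estimated'' whenever $\hat\beta_{E_{k\cdot}}$ departs from the target value $\beta^{\star}_{E_{k\cdot}}$ realizing $\bm P_T$ by more than a fixed tolerance $c$; since $\hat\beta_{E_{k\cdot}} - \beta^{\star}_{E_{k\cdot}} \sim N(0,\mathrm{Var}(\hat\beta_{E_{k\cdot}}))$ under that hypothesis, the rejection probability is $\alpha_{E_{k\cdot}} = 2\Phi\!\left(-c/\sqrt{\mathrm{Var}(\hat\beta_{E_{k\cdot}})}\right)$, strictly increasing in $\mathrm{Var}(\hat\beta_{E_{k\cdot}})$ and hence in $\mathrm{VIF}_{E_{k\cdot}}$. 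Combining with the second step gives $\alpha_{E_{k\cdot}}^{\text{ortho}} \le \alpha_{E_{k\cdot}}^{\text{non-ortho}}$, which is the assertion of the theorem; the studentized version is identical with $\Phi$ replaced by the Student CDF and $\sigma$ by its estimate.

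The main obstacle is not a single calculation but fixing the statistical model so the comparison is both fair and non-vacuous: one must (i) read ``non-orthonormal'' as ``not mutually orthogonal'' (or otherwise dispatch the orthogonal-but-not-unit-norm borderline case through the $S_k^2$ denominator) so that the strict inequality $\mathrm{VIF}_{E_{k\cdot}} > 1$ genuinely holds; (ii) hold $\sigma^2$, the sample size, and the design-matrix normalization fixed across the two regimes, since VIF is only comparable under a common scale; and (iii) justify stating the rejection rule through an absolute deviation (fixed $c$) rather than a studentized one, because a correctly studentized $t$-test would reject at its nominal level regardless of VIF. Once these choices are in place, the remaining ingredients --- the OLS variance identity and the monotonicity of a Gaussian tail --- are routine.
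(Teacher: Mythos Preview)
Your proposal is correct and follows essentially the same two-stage skeleton as the paper: first use the variance inflation factor to show that orthonormal centers minimize the variance of the relevant OLS estimate, then feed that variance into a fixed-threshold hypothesis test so that the rejection probability (the significance level $\alpha_{E_{k\cdot}}$) is monotone in the variance.

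The one genuine difference is in how the second stage is closed. You exploit the Gaussian error assumption already present in the setup ($\epsilon_i \sim N(0,\sigma^2)$) to write the rejection probability exactly as $2\Phi(-c/\sqrt{\mathrm{Var}(\hat\beta_{E_{k\cdot}})})$, which is manifestly increasing in the variance. The paper instead routes the argument through the mean squared error, treats it as the variance of the test-statistic distribution, and bounds the tail probability over the rejection region by the Vysochanskij--Petunin inequality (a sharpened Chebyshev bound). Your route is tighter and more elementary under the stated Gaussian model; the paper's route is distribution-agnostic but looser and relies on the somewhat informal identification of MSE with the variance of $f$. Your explicit listing of the side conditions (fixed $\sigma^2$, common normalization, absolute rather than studentized rejection rule) is also more careful than the paper, which leaves those points implicit.
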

More details can be seen in Appendix \ref{app:prove}.

\subsection{Generalizing \poolingshort to Other Graph Tasks and Domains}

In this work, we tested the proposed \poolingshort on functional connectivity (FC) based brain networks. Other popular modalities of brain networks include structural connectivities (SC), which describe the anatomical organization of the brain by measuring the fiber tracts between brain regions \citep{10.3389/fnhum.2021.721206}. In SC-based brain networks, ROIs that are positionally close to each other on the structural connectivity networks tend to share similar connection profiles. This means the idea of \poolingshort is also naturally applicable to SC networks, where the orthonormal clustering is based on the physical distances instead of the functional modules on FC. 

At a higher level, the idea of our proposed \poolingshort is not confined to graph-level prediction tasks on brain networks but can also be generalized to other graph learning tasks and domains. Precisely, there is a growing tendency in node/edge level prediction tasks to enhance the node/edge representation learning by utilizing the subgraph embeddings around each target node/edge \citep{NEURIPS2021_8462a7c2, linkprediction}. In this process, substructure learning needs to be performed on the subgraphs, where our proposed \poolingshort can be adapted for compressing a set of node embeddings to subgraph embeddings. 
Besides, \poolingshort is also potentially useful for other types of graphs in the biomedical domains. For example, for protein-protein interaction networks, proteins can be implicitly grouped by families that share common evolutionary origins \citep{10.1093/nar/gkaa913}, whereas for gene expression networks, genes can be grouped based on the latent pathway information \citep{kanehisa2000kegg}. Both of them are potential directions for the future application of \poolingshort, among many others driven by biological or other types of prior knowledge regarding underlying node/edge groups.

	\section{Experiments}
This section evaluates the effectiveness of our proposed \methodfull (\methodtable) with extensive experiments. We aim to address the following research questions: 

\textbf{RQ1.} How does \methodtable perform compared with state-of-the-art models of various types? 

\textbf{RQ2.} How does our proposed \poolingshort module perform with different model choices? 

\textbf{RQ3.} Does the learned model of \methodtable exhibit consistency with existing neuroscience knowledge and suggest reasonable explainability?

\subsection{Experimental Settings}
\label{sec:expset}
\textbf{Datasets.} We conduct experiments on two real-world fMRI datasets. 
(a) \textit{Autism Brain Imaging Data Exchange (ABIDE)}: This dataset collects resting-state functional magnetic resonance imaging (rs-fMRI) data from 17 international sites, and all data are anonymous \citep{abide}. The used dataset contains brain networks from 1009 subjects, with 516 (51.14\%) being Autism spectrum disorder (ASD) patients (positives). The region definition is based on Craddock 200 atlas \cite{craddock2012whole}. As the most convenient open-source large-scale dataset, it provides generated brain networks and can be downloaded directly without permission request. Despite the ease of acquisition, the heterogeneity of the data collection process hinders its use. Since multi-site data are collected from different scanners with different acquisition parameters, non-neural inter-site variability may mask inter-group differences. In practice, we find the training unstable, and there is a significant gap between validation and testing performances. However, we discover that most models can achieve a stable performance if we follow an appropriate stratified sampling strategy by considering collection sites during the training-validation-testing splitting process for ABIDE. Training curves in Appendix \ref{app:setting} also show how different models achieve a stabler performance on our designed new splitting settings than the random splitting. Therefore, we use ABIDE as one of the benchmark datasets in this work, and we share our re-standardized data splitting to provide a fair evaluation pipeline for various future methods.
(b) \textit{Adolescent Brain Cognitive Development Study (ABCD)}: This is one of the largest publicly available fMRI datasets with restricted access (a strict data requesting process needs to be followed to obtain the data) \citep{ABCD}. The data we use in the experiments are fully anonymized brain networks with only biological sex labels. After the quality control process, 7901 subjects are included in the analysis, with 3961 (50.1\%) among them being female. The region definition is based on the HCP 360 ROI atlas \citep{GLASSER2013105}.

\textbf{Metrics.} The diagnosis of ASD is the prediction target on ABIDE, while biological sex prediction is used as the evaluation task for ABCD. Both prediction tasks are binary classification problems, and both datasets are balanced between classes. Hence, AUROC is a proper performance metric adopted for fair comparison at various threshold settings, and accuracy is applied to reflect the prediction performance when the threshold is 0.5. Besides, since the model is mainly for medical applications, we add two critical metrics for diagnostic tests, Sensitivity and Specificity, which respectively refer to true positive rate and true negative rate. All reported performances are the average of 5 random runs on the test set with the standard deviation.

\textbf{Implementation details.} For experiments, we use a two-layer Multi-Head Self-Attention Module and set the number of heads $M$ to 4 for each layer. We randomly split 70\% of the datasets for training, 10\% for validation, and the remaining are utilized as the test set. In the training process of \methodtable, we use an Adam optimizer with an initial learning rate of $10^{-4}$ and a weight decay of $10^{-4}$. The batch size is set as 64. All models are trained for 200 epochs, and the epoch with the highest AUROC performance on the validation set is used for performance comparison on the test set. The model is trained on an NVIDIA Quadro RTX 8000. Please refer to the \href{https://github.com/Wayfear/BrainNetworkTransformer}{repository} and Appendix~\ref{app:software} for the full implementation of \methodtable. 

\textbf{Computation complexity.} In \methodtable, the computation complexity of Multi-Head Self-Attention Module and \poolingshort are $\mathcal{O}(LMV^2)$ and $\mathcal{O}(KV)$ respectively, where $L$ is the layer number of Multi-Head Self-Attention Module, $V$ is the number of nodes, $M$ is the number of heads, and $K$ is the number of clusters in \poolingshort. The overall computation complexity of \methodtable is thus $\mathcal{O}(V^2)$, which is on the same scale as common GNNs on brain networks such as BrainGNN \citep{li2020braingnn} and BrainGB \citep{braingb}. 

\subsection{Performance Analysis (RQ1)}
We compare \methodtable with baselines of three types. The details about how to tune hyperparameters of various baselines can be found in Appendix~\ref{app:turing}. Besides, Appendix~\ref{app:para} shows the comparison of the number of parameters between our model and other baseline models, which shows that the parameter size of \methodtable is larger than GNN and CNN models but smaller than other transformer models.
\textbf{(a) \methodtable vs.~other graph transformers.}
We compare \methodtable with two popular graph Transformers, SAN \citep{san} and Graphormer \citep{graphormer}. In addition, we also include a basic version of \methodtable without \poolingshort, composed of a Transformer with a 2-layer Multi-Head Self-Attention and a CONCAT-based readout named VanillaTF. Our \methodtable outperforms SAN and Graphormer by significant margins, with up to 6\% absolute improvements on both datasets. VanillaTF also surpasses SAN and Graphormer. We believe this downgraded performance of existing graph transformers results from their design flaws facing the natures of brain networks. Specifically, both the preprocessing and the training stages of the Graphormer model accepts only discrete, categorical data. A bin operator has to be applied on the adjacency matrix, coarsening the node feature from connection profiles and dramatically hurting the performance. Furthermore, since brain networks are complete graphs, key designs like centrality encoding and spatial encoding of Graphormer cannot be appropriately applied. Similarly, for SAN, experiments in Appendix \ref{app:node_feature} show that adding eigen node features to connection profiles cannot improve the model's performance. Besides, the benchmark paper~\citep{braingb} reveals that injecting edge weights into the attention mechanism can significantly reduce the prediction power. Furthermore, Appendix~\ref{app:time} shows our \methodtable is much faster than other graph transformers due to special optimizations towards brain networks. \textbf{(b) \methodtable vs.~neural network models on fixed brain networks.}
We further introduce another three neural network baselines on fixed brain networks. BrainGNN~\citep{li2020braingnn} designs ROI-aware GNNs for brain network analysis. BrainGB~\citep{braingb} is a systematic study of how to design effective GNNs for brain network analysis. We adopt their best design as the BrainGB baseline. BrainnetCNN~\citep{BrainNetCNN} represents state-of-the-art of specialized GNNs for brain network analysis, which models the adjacency matrix of a brain network similarly as a 2D image. As is shown in Table \ref{tab:performance}, \methodtable consistently outperforms BrainGNN, BrainGB and BrainnetCNN. \textbf{(c) \methodtable vs.~neural network models on learnable brain networks.} Unlike classical GNNs, FBNETGEN~\citep{kan2022fbnetgen}, DGM~\citep{9763421} and BrainNetGNN~\citep{a14030075} hold a similar idea, which is to apply GNNs based on a learnable graph. FBNETGEN achieves SOTA performance on the ABCD dataset for biological sex prediction, and the learnable graphs can be seen as a type of attention score. Experiment results show that our proposed \methodtable beats all three of them on both datasets.

\begin{table*}[htbp]
\centering
\small
\caption{Performance comparison with different baselines (\%). The performance gains of \methodtable over the baselines have passed the t-test with p-value$<$0.03.}
\label{tab:performance}
\resizebox{1.0\linewidth}{!}{
\begin{tabular}{ccccccccccccc}
\toprule
\multirow{2.5}{*}{Type} & \multirow{2.5}{*}{Method} &\multicolumn{4}{c}{\bf Dataset: ABIDE} & & \multicolumn{4}{c}{\bf Dataset: ABCD}\\
\cmidrule(lr){3-6} \cmidrule(lr){8-11} 
& & {AUROC} & {Accuracy} & {Sensitivity}& {Specificity} & { } & {AUROC} & {Accuracy} & {Sensitivity}& {Specificity} \\
\midrule
\multirow{3}{*}{\thead{Graph\\Transformer}}
& SAN & 71.3±2.1 & 65.3±2.9 & 55.4±9.2& 68.3±7.5 & & 90.1±1.2 & 81.0±1.3 &84.9±3.5&77.5±4.1 \\
& Graphormer & 63.5±3.7 & 60.8±2.7 &\textbf{78.7±22.3}&36.7±23.5 &  & 89.0±1.4 & 80.2±1.3 & 81.8±11.6	&82.4±7.4  \\
& VanillaTF & 76.4±1.2 & 65.2±1.2& 66.4±11.4&71.1±12.0  & &94.3±0.7 & 85.9±1.4& 87.7±2.4&82.6±3.9  \\
\midrule
\multirow{3}{*}{\thead{Fixed\\Network}}
&BrainGNN & 62.4±3.5&59.4±2.3&36.7±24.0&70.7±19.3 && OOM&OOM&OOM&OOM\\
& BrainGB      & 69.7±3.3 & 63.6±1.9& 63.7±8.3&60.4±10.1   & & 91.9±0.3 & 83.1±0.5& 84.6±4.3&81.5±3.9 \\
& BrainNetCNN  & 74.9±2.4 & 67.8±2.7& 63.8±9.7&71.0±10.2   & & 93.5±0.3 & 85.7±0.8& 87.9±3.4&83.0±4.4  \\
\midrule
\multirow{3}{*}{\thead{Learnable\\Network}} 
& FBNETGNN&  75.6±1.2 & 68.0±1.4& 64.7±8.7&62.4±9.2  & & 94.5±0.7 & 87.2±1.2& 87.0±2.5&86.7±2.8   \\
& BrainNetGNN &55.3±1.9&51.2±5.4&67.7±37.5&33.9±34.2 && 75.3±5.2&67.5±4.7&67.7±5.7&68.0±6.5 \\
& DGM & 52.7±3.8&60.7±12.6&53.8±41.2&51.1±40.9 && 76.8±19.0&68.6±8.1&40.5±29.7&\textbf{95.6±4.2} \\
\midrule
\multirow{1}{*}{Ours} 
& \methodtable & \textbf{80.2±1.0} & \textbf{71.0±1.2}& 72.5±5.2&\textbf{69.3±6.5}  & & \textbf{96.2±0.3} & \textbf{88.4±0.4} &\textbf{89.4±2.6}&88.4±1.5 \\
\bottomrule
\end{tabular}
}
\end{table*}

\subsection{Ablation Studies on the \poolingshort Module (RQ2)}

\subsubsection{\poolingshort with varying readout functions}
We vary the readout function for various Transformer architectures, including SAN, Graphormer and VanillaTF, to observe the performance of each ablated model variant. The results shown in Table \ref{tab:readout} demonstrate that our \poolingshort is the most effective readout function for brain networks and improves the prediction power across various Transformer architectures.

\begin{table*}[htbp]
\centering
\small
\caption{Performance comparison AUROC (\%) with different readout functions.}
\label{tab:readout}
\resizebox{0.85\linewidth}{!}{
\begin{tabular}{cccccccc}
\toprule

\multirow{2.5}{*}{Readout} &\multicolumn{3}{c}{\bf Dataset: ABIDE} & & \multicolumn{3}{c}{\bf Dataset: ABCD}\\
\cmidrule(lr){2-4} \cmidrule(lr){6-8} 
 & {SAN} & {Graphormer}& {VanillaTF} &{ }  & {SAN} & {Graphormer}& {VanillaTF} \\
\midrule
MEAN  & 63.7±2.4 & 50.1±1.1& 73.4±1.4& &88.5±0.9 & 87.6±1.3&91.3±0.7 \\
MAX  & 61.9±2.5 & 54.5±3.6& 75.6±1.4&   & 87.4±1.1 & 81.6±0.8& 94.4±0.6\\
SUM  & 62.0±2.3 & 54.1±1.3& 70.3±1.6&   & 84.2±0.8 & 71.5±0.9& 91.6±0.6\\
SortPooling &68.7±2.3 & 51.3±2.2& 72.4±1.3  &   & 84.6±1.1 & 86.7±1.0& 89.9±0.6\\
DiffPool & 57.4±5.2 &50.5±4.7& 62.9±7.3  &  & 78.1±1.5 & 70.0±1.9& 83.9±1.3 \\
CONCAT & \textbf{71.3±2.1} &63.5±3.7& 76.4±1.2  &  & 90.1±1.2 & 89.0±1.4& 94.3±0.7 \\
\midrule
\poolingshort& 70.6±2.4 &\textbf{64.9±2.7} & \textbf{80.2±1.0} &  & \textbf{91.2±0.7} & \textbf{90.2±0.7}& \textbf{96.2±0.4}\\
\bottomrule
\end{tabular}
}
\end{table*}

\subsubsection{\poolingshort with varying cluster initializations}
To further demonstrate how the design of \poolingshort influences the performance of \methodtable, we investigate two key model selections, the initialization method for cluster centers and the cluster number $K$. For the initialization, three different kinds of initialization procedures are compared, namely (a) \textbf{Random}: the Xavier uniform \citep{glorot2010understanding} is leveraged to randomly generate a group of centers, which are then normalized into unit vectors; (b) \textbf{Learnable}: the same initial process as Random, but the generated centers are further updated with gradient descent; (c) \textbf{Orthonormal}: our proposed process as described in Eq.~\eqref{equ:otho}. 

Specifically, we test each initialization method with the cluster number $K$ equals to {2, 3, 4, 5, 10, 50, 100}. The results of adjusting these two hyper-parameters on ABIDE and ABCD datasets are shown in Figure \ref{fig:parameter}(a). We observe that: (1) When cluster centers are orthonormal, the model's performance increases with the number of clusters ranging from 2 to 10, and then drops with the cluster number rising from 10 to 100, suggesting the optimal cluster number to be relatively small, which leads to less computation and is consistent with the fact that the typical number of functional modules are smaller than 25; (2) With a sufficiently large cluster number, all three initialization methods, Random, Learnable and Orthonormal, tend to reach similar performance, but orthonormal performs stably better when the number of clusters is smaller; (3) It is also notable that our \poolingshort consistently achieves the best performance over other initialization methods regarding smaller standard deviations. 

\begin{figure*}[h]
    \centering
    \includegraphics[width=0.90\linewidth]{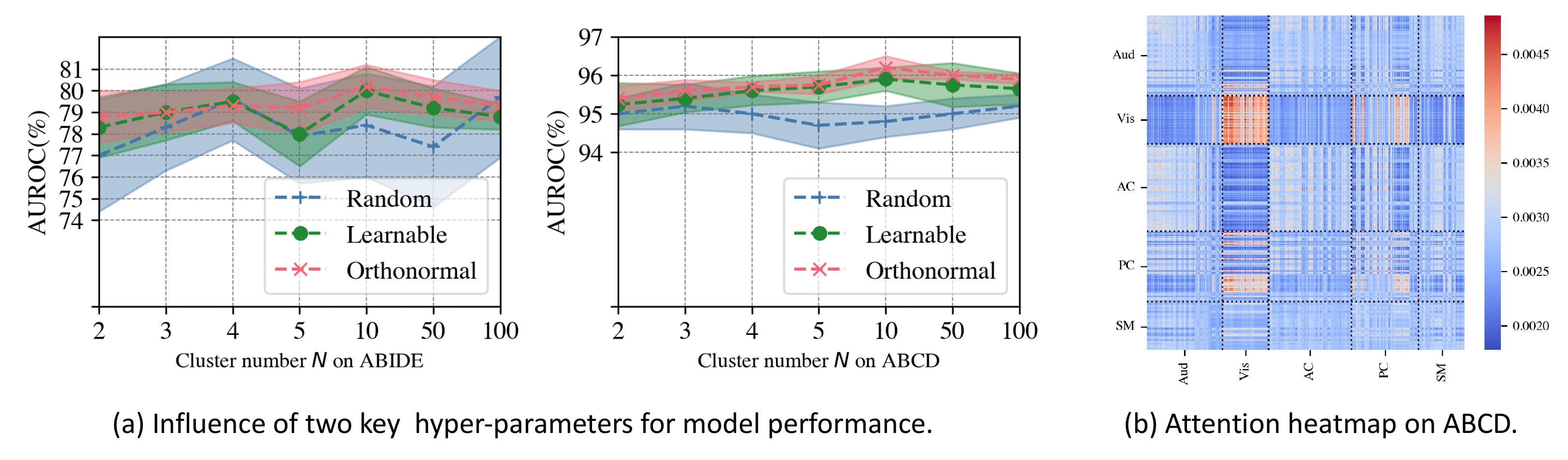}
    \caption{The hyper-parameter influence and the heatmap from self-attention.}
    \label{fig:parameter}
\end{figure*}

\subsection{In-depth Analysis of Attention Scores and Cluster Assignments (RQ3)}
Figure \ref{fig:parameter}(b) displays the self-attention score from the first layer of Multi-Head Self-Attention. The attention scores are the average across all subjects in the ABCD test set. This figure shows that the learned attention scores well match the divisions of functional modules based on available labels, demonstrating the effectiveness and explainability of our Transformer model. Note that since there exists no available functional module labels for the atlas of the ABIDE dataset, we cannot visualize the correlations between attention scores and functional modules.

Figure \ref{fig:soft_att} shows the cluster soft assignment results $\bm P$ on nodes in \poolingshort with two initialization methods. The cluster number $K$ is set to 4. The visualized numerical values are the average $\bm P$ of all subjects in each dataset's test set. From the visualization, we observe that (a) Base on Appendix~\ref{app:difference}, orthonormal initialization produces more discriminative $\bm P$ between classes than random initialization; (b) Within each class, orthonormal initialization encourages the nodes to form groups. These observations demonstrate that our \poolingshort with orthonormal initialization can leverage potential clusters underlying node embeddings, thus automatically grouping brain regions into potential functional modules. 

\begin{figure*}[h]
    \centering
    \includegraphics[width=0.9\linewidth]{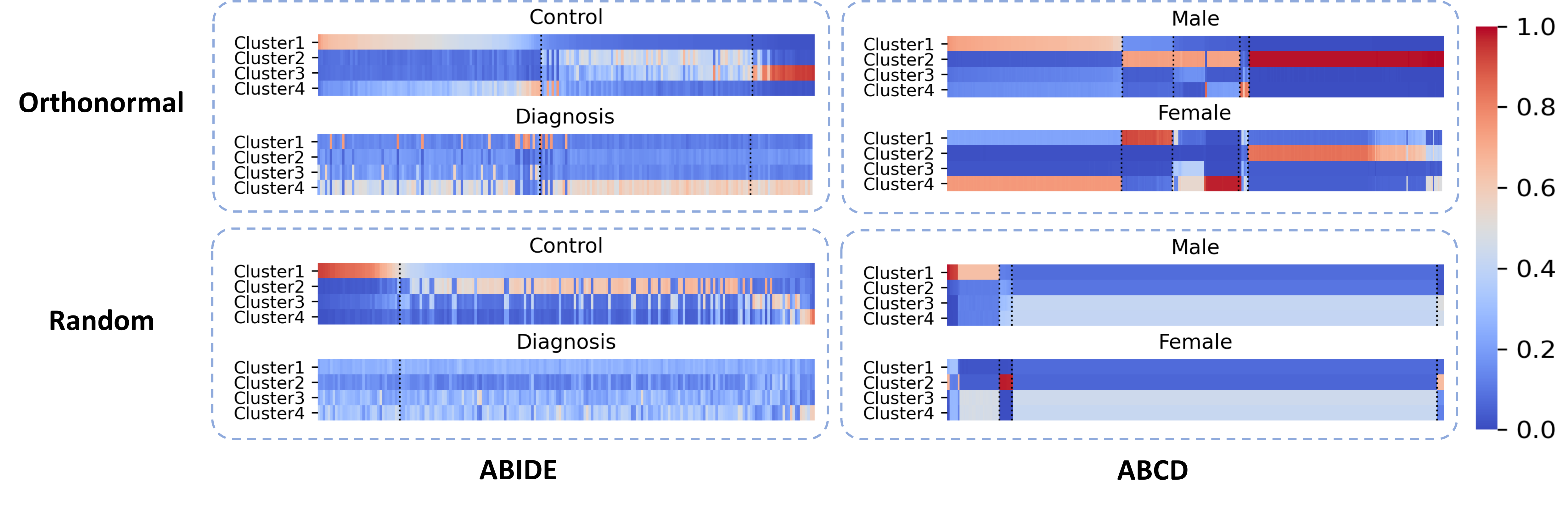}
    \caption{Visualization of cluster (module-level) embeddings learned with Orthonormal vs.~Random cluster center initializations on two datasets.  Each group in the dotted box contains two heatmaps (one for each prediction class) with the same node ordering on the x-axis.}
    \label{fig:soft_att}
\end{figure*}
 
	\section{Discussion and Conclusion}
\label{sec:conclusion}
Neuroimaging technologies, including functional magnetic resonance imaging (fMRI) are powerful noninvasive tools for examining the brain functioning. There is an emerging nation-wide interest in conducting neuroimaging studies for investigating the connection between the biology of the brain, and demographic variables and clinical outcomes such as mental disorders. Such studies provide an unprecedented opportunity for cross-cutting investigations that may offer new insights to the differences in brain function and organization across subpopulations in the society (such as biological sex and age groups) as well as reveal neurophysiological mechanisms underlying brain disorders (such as psychiatric illnesses and neurodegenerative diseases). These studies have a tremendous impact in social studies and biomedical sciences. For example, mental disorders are the leading cause of disability in the USA and roughly 1 in 17 have a seriously debilitating mental illness. To address this burden, national institutions such as NIH have included brain-behavior research as one of their strategic objectives and stated that sound efforts must be made to redefine mental disorders into dimensions or components of observable behaviors that are more closely aligned with the biology of the brain. Using brain imaging data to predict diagnosis has great potential to result in mechanisms that target for more effective preemption and treatment.

In this paper, we present \methodfull, a specialized graph Transformer model with \pooling for brain network analysis. Extensive experiments on two large-scale brain network datasets demonstrate that our \methodtable achieves superior performance over SOTA baselines of various types. Specifically, to model the potential node feature similarity in brain networks, we design \poolingshort and prove its effectiveness both theoretically and empirically. Lastly, the re-standardized dataset split for ABIDE can provide a fair evaluation for new methods in the community. For future work, \methodtable can be improved with explicit explanation modules and used as the backbone for further brain network analysis, such as digging essential neural circuits for mental disorders and understanding cognitive development in adolescents.

	\section{Acknowledgments}

This research was supported in part by the University Research Committee of Emory University, and the internal funding and GPU servers provided by the Computer Science Department of Emory University. The authors gratefully acknowledge support from NIH under award number R01MH105561 and R01MH118771. The content is solely the responsibility of the authors and does not necessarily represent the official views of the National Institutes of Health. 

Data used in the preparation of this article were obtained from the Adolescent Brain Cognitive Development (ABCD) Study (\url{https://abcdstudy.org}), held in the NIMH Data Archive (NDA). This is a multisite, longitudinal study designed to recruit more than 10,000 children age 9-10 and follow them over 10 years into early adulthood. The ABCD Study\textsuperscript{\textregistered} is supported by the National Institutes of Health and additional federal partners under award numbers U01DA041048, U01DA050989, U01DA051016, U01DA041022, U01DA051018, U01DA051037, U01DA050987, U01DA041174, U01DA041106, U01DA041117, U01DA041028, U01DA041134, U01DA050988, U01DA051039, U01DA041156, U01DA041025, U01DA041120, U01DA051038, U01DA041148, U01DA041093, U01DA041089, U24DA041123, U24DA041147. A full list of supporters is available at  \sloppy\url{https://abcdstudy.org/federal-partners.html}. A listing of participating sites and a complete listing of the study investigators can be found at  \url{https://abcdstudy.org/consortium_members/}. ABCD consortium investigators designed and implemented the study and/or provided data but did not necessarily participate in the analysis or writing of this report. This manuscript reflects the views of the authors and may not reflect the opinions or views of the NIH or ABCD consortium investigators. The ABCD data repository grows and changes over time. The ABCD data used in this report came from NIMH Data Archive Release 4.0 (DOI 10.15154/1523041). DOIs can be found at \url{https://nda.nih.gov/abcd}.
\newpage
	
	\bibliographystyle{plain}
	\bibliography{reference}
	\clearpage
	\section*{Checklist}

\begin{enumerate}

\item For all authors...
\begin{enumerate}
  \item Do the main claims made in the abstract and introduction accurately reflect the paper's contributions and scope?
    \answerYes{}
  \item Did you describe the limitations of your work?
    \answerYes{} See Section \ref{sec:conclusion}.
  \item Did you discuss any potential negative societal impacts of your work?
    \answerNA{}
  \item Have you read the ethics review guidelines and ensured that your paper conforms to them?
    \answerYes{}
\end{enumerate}

\item If you are including theoretical results...
\begin{enumerate}
  \item Did you state the full set of assumptions of all theoretical results?
    \answerYes{} See Appendix \ref{app:prove}.
        \item Did you include complete proofs of all theoretical results?
    \answerYes{} See Appendix \ref{app:prove}.
\end{enumerate}

\item If you ran experiments...
\begin{enumerate}
  \item Did you include the code, data, and instructions needed to reproduce the main experimental results (either in the supplemental material or as a URL)?
    \answerYes{}
  \item Did you specify all the training details (e.g., data splits, hyperparameters, how they were chosen)?
    \answerYes{} See Implementation details in Section \ref{sec:expset}.
        \item Did you report error bars (e.g., with respect to the random seed after running experiments multiple times)?
    \answerYes{} See Metrics in Section \ref{sec:expset}.
        \item Did you include the total amount of compute and the type of resources used (e.g., type of GPUs, internal cluster, or cloud provider)?
    \answerYes{} See Implementation details in Section \ref{sec:expset}.
\end{enumerate}

\item If you are using existing assets (e.g., code, data, models) or curating/releasing new assets...
\begin{enumerate}
  \item If your work uses existing assets, did you cite the creators?
    \answerYes{} See Datasets in Section \ref{sec:expset}.
  \item Did you mention the license of the assets?
    \answerNA{} ABIDE and ABCD do not provide any license.
  \item Did you include any new assets either in the supplemental material or as a URL?
    \answerYes{} We provide our implementation and share our repository with MIT license.
  \item Did you discuss whether and how consent was obtained from people whose data you're using/curating?
    \answerYes{} See Datasets in Section \ref{sec:expset}.
  \item Did you discuss whether the data you are using/curating contains personally identifiable information or offensive content?
    \answerYes{} See Datasets in Section \ref{sec:expset}.
\end{enumerate}

\item If you used crowdsourcing or conducted research with human subjects...
\begin{enumerate}
  \item Did you include the full text of instructions given to participants and screenshots, if applicable?
    \answerNA{}
  \item Did you describe any potential participant risks, with links to Institutional Review Board (IRB) approvals, if applicable?
    \answerNA{}
  \item Did you include the estimated hourly wage paid to participants and the total amount spent on participant compensation?
    \answerNA{}
\end{enumerate}

\end{enumerate}
	\clearpage
	\appendix
	\section{Training Curves of Different Models with or without StratifiedSampling}
\label{app:setting}
In Figure \ref{fig:setting}, we demonstrate the training curves of different models with or without stratified sampling based on site information from ABIDE. The curves of different variants display similar patterns across three model architectures in a single run. We remove Graphormer since its performance is much worse than others. Specifically, it is shown that (a) with stratified sampling, the performance gap between validation and test on ABIDE is much smaller than the one without stratified sampling; (b) stratified sampling can stabilize the training process on ABIDE, especially for VanillaTF and \methodtable.
\begin{figure*}[h]
	\centering
	\includegraphics[width=1.0\linewidth]{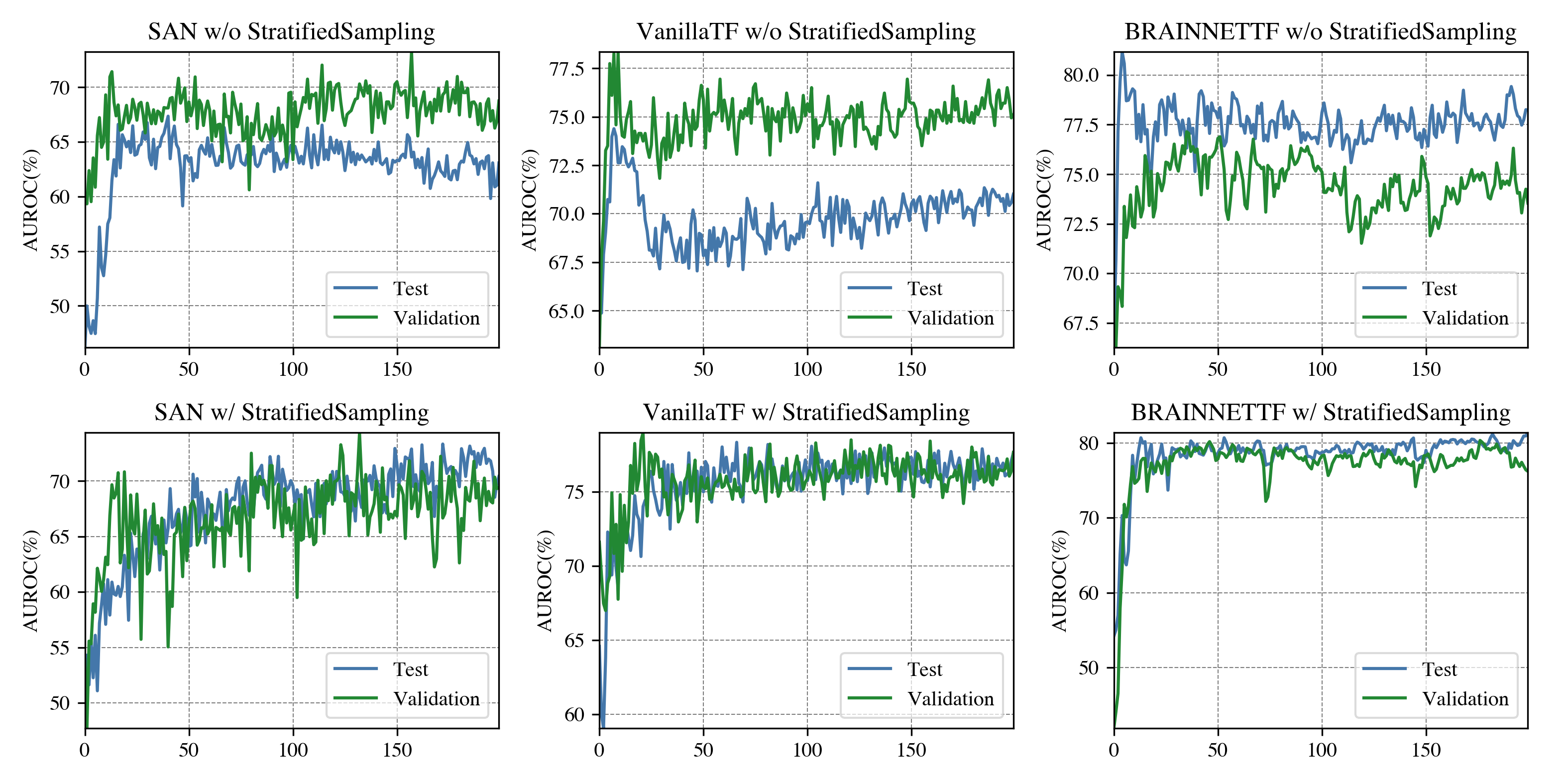}
	\caption{Training Curves of Different Models with or without StratifiedSampling.}
	\label{fig:setting}
\end{figure*}

\section{Transformer Performance with Different Node Features}
\label{app:node_feature}
We compare the performance of Transformer model equipped with different node features. The results are shown in Table 
\ref{tab:node_feature}, where connection profile represents the corresponding row for each node in the adjacency matrix, identity feature initializes a unique one-hot vector for each node, and eigen feature generates a $k$-dimensional feature vector for each node from the $k$ eigenvectors based on the eigendecomposition on the adjacency matrix. Empirical observations demonstrate that adding identity or eigen node features to connection profiles cannot improve the model's performance. 
\begin{table*}[htbp]
	\centering
	\small
	\resizebox{0.8\linewidth}{!}{
		\begin{tabular}{ccccc}
			\toprule
			\multirow{2.5}{*}{\bf Model} &\multirow{2.5}{*}{\bf Node Feature} & \multirow{2.5}{*}{} & \multicolumn{2}{c}{\bf Dataset}\\
			\cmidrule(lr){4-5} 
			& & & {ABIDE} & {ABCD}  \\
			\midrule
			\multirow{3}{*}{VanillaTF}&Connection Profile & & 76.4±1.2 & 94.3±0.7 \\
			&Connection Profile w/ Identity Feature & & 75.4±1.9 & 94.5±0.6 \\
			&Connection Profile w/ Eigen Feature & & 75.9±2.1 & 94.0±0.8 \\
			\bottomrule
		\end{tabular}
	}
	\caption{The Performance (AUROC\%) of Transformer with Different Node Features.}
	\label{tab:node_feature}
\end{table*}

\section{Statistical Proof of the Goodness with Orthonormal Cluster Centers} \label{app:prove}

We propose two statistical methods to prove the goodness in orthonormal case since it is impractical to directly compare the performance of the orthonormal and non-orthonormal initializations.

\subsection{Proof of Theorem \ref{Thm3.1}}
We state Theorem \ref{Thm3.1} here and show the proof details.
\begin{theorem}
	For arbitrary $r > 0$, let $B_r = \{\mathcal{\bm Z} \in \mathbb{R}^V; \Vert \mathcal{\bm Z} \Vert \leq r\}$ denote the round ball centered at origin of radius $r$ with $\mathcal{\bm Z}$ being fracture vectors. Let $V_r$ be the volume of $B_r$. The variance of Softmax projection averaged over $B_r$
	\begin{align}
		\frac{1}{V_r} \int_{B_r} \sum_k^K \Big( \frac{e^{\langle \mathcal{\bm Z}, \bm E_{k\cdot}\rangle} }{\sum_{k^{\prime}}^K e^{\langle \mathcal{\bm Z}, \bm E_{k^{\prime}\cdot}\rangle}} - \frac{1}{K} \Big)^2 d \mathcal{\bm Z},
	\end{align}
	attains maximum when $\bm E$ is orthonormal.
\end{theorem}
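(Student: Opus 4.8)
The plan is to turn the stated optimization over matrices $\bm E$ into a finite-dimensional one and then solve it without ever producing a closed form for the integral. First I would put the objective in a convenient normal form: writing $p_k(\bm Z)$ for the Softmax weights of \eqref{equ:project} and using $\sum_k p_k\equiv 1$, the integrand collapses to the collision probability $\sum_k p_k(\bm Z)^2-\tfrac1K$, so the averaged variance equals $\tfrac1{V_r}\int_{B_r}\sum_k p_k(\bm Z)^2\,d\bm Z-\tfrac1K$ and it suffices to maximize $J(\bm E):=\tfrac1{V_r}\int_{B_r}\sum_k p_k(\bm Z)^2\,d\bm Z$ subject to the rows of $\bm E$ being unit vectors, as produced by \eqref{equ:otho}. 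Next, since $B_r$ is invariant under the orthogonal group and the logits $\langle\bm Z,\bm E_{k\cdot}\rangle$ are inner products, the substitution $\bm Z\mapsto\bm Q\bm Z$ with $\bm Q\in O(V)$ gives $J(\bm E)=J(\bm E\bm Q^{\top})$; hence $J$ depends on $\bm E$ only through the Gram matrix $\bm M:=\bm E\bm E^{\top}$, which ranges over $K\times K$ correlation matrices (positive semidefinite, unit diagonal), and orthonormality of $\bm E$ is exactly $\bm M=\bm I_K$. Because $\bm I_K$ is positive definite it lies in the relative interior of this convex set, so we may use ordinary calculus there.

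To bypass the lack of an elementary antiderivative, I would expand the Softmax at the origin rather than integrate it: with $a_k=\langle\bm Z,\bm E_{k\cdot}\rangle$ and $\bar a=\tfrac1K\sum_k a_k$ one has $p_k=\tfrac1K+\tfrac1K(a_k-\bar a)+O(\|\bm a\|^2)$, so $\sum_k p_k^2=\tfrac1K+\tfrac1{K^2}\sum_k(a_k-\bar a)^2+O(\|\bm a\|^3)$. In polar coordinates $\bm Z=\rho\bm\omega$, the average over $\bm\omega\in S^{V-1}$ of any monomial in the $a_k$'s is a polynomial in the entries of $\bm M$ while the radial integral contributes only positive constants depending on $r$ and $V$; thus each term of the resulting series for $J$ is an explicit polynomial in $\bm M$, and the non-elementary integral never appears. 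For instance the quadratic term yields $\int_{B_r}\sum_k(a_k-\bar a)^2\,d\bm Z=c_{r,V}\,\mathrm{tr}\!\big((\bm I_K-\tfrac1K\mathbf 1\mathbf 1^{\top})\bm M\big)$, using $\int_{B_r}\bm Z\bm Z^{\top}\,d\bm Z=c_{r,V}\bm I_V$.

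What then remains is the finite-dimensional claim that this polynomial functional of the correlation matrix $\bm M$ is maximized at $\bm M=\bm I_K$. Two ingredients are needed. For stationarity I would differentiate $J$ in $\bm E_{k\cdot}$ using the Softmax Jacobian $\partial p_j/\partial a_k=p_j(\delta_{jk}-p_k)$ and show that, evaluated at an orthonormal $\bm E$, the gradient paired against any admissible (zero-diagonal) perturbation of $\bm M$ reduces to integrals over $B_r$ of integrands that are odd under a coordinate reflection $Z_i\mapsto-Z_i$, hence vanish — a Lagrange-multiplier check then certifies $\bm I_K$ as a critical point. Upgrading this to a global maximum is the main obstacle: the natural statement is that $J$ is a Schur-concave-type function of the off-diagonal inner products $\{M_{kl}\}_{k<l}$ — driving the cluster centers toward mutual orthogonality makes one logit dominate for a typical $\bm Z$ and sharpens the Softmax, raising $\sum_k p_k^2$ — but making this monotonicity rigorous requires controlling all higher-order terms of the above expansion uniformly in $r$ (or a direct convexity argument on the elliptope), and the convexity of log-sum-exp underlying the Softmax is the tool I would push to close it.
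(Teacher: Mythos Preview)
Your route is genuinely different from the paper's: they work directly in the case $V=K=2$, parametrize the second center by an angle $\phi\in[0,\pi/2]$, exploit $\pi$-periodicity and a reflection symmetry of the integrand in $\theta$, and then show that a certain directional derivative in the $(\theta,\phi)$-plane is nonnegative, yielding monotonicity of $F(\phi)$ on that interval; higher dimensions are only sketched by analogy. Your Gram-matrix reduction $J(\bm E)=J(\bm M)$ and small-$r$ expansion are more structural and in principle treat all $K,V$ at once.

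The stationarity step, however, does not go through, and the failure is already visible in your own quadratic term. With unit diagonal one has
\[
\mathrm{tr}\big((\bm I_K-\tfrac1K\mathbf 1\mathbf 1^{\top})\bm M\big)=K-1-\tfrac{2}{K}\sum_{k<l}M_{kl},
\]
so to leading order in $r$ the functional $J$ is strictly \emph{decreasing} in every off-diagonal inner product; hence $\bm M=\bm I_K$ is not a critical point on the elliptope, and pushing $M_{12}$ negative increases $J$. The parity claim fails for the same reason: at $\bm E_k=e_k$ the $M_{12}$-derivative of $\sum_k p_k^2$ is (for $K=2$) proportional to $Z_1\,p_1p_2(p_1-p_2)$ with $p_i=e^{Z_i}/(e^{Z_1}+e^{Z_2})$, and this is not odd under either $Z_1\mapsto-Z_1$ or $Z_2\mapsto-Z_2$. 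Concretely, for $K=2$ anti-parallel centers give averaged variance $\tfrac{1}{2V_r}\int_{B_r}\tanh^2(Z_1)\,d\bm Z$, whereas orthogonal centers give $\tfrac{1}{2V_r}\int_{B_r}\tanh^2(Z_1/\sqrt 2)\,d\bm Z$ after a rotation, and the first is strictly larger. The paper sidesteps this by restricting a priori to $\phi\le\pi/2$, so the optimum sits on the boundary $\phi=\pi/2$ rather than at an interior critical point. If you keep your framework you must likewise impose $M_{kl}\ge 0$ and argue a \emph{boundary} maximum via one-sided monotonicity in each $M_{kl}\ge 0$; interior stationarity and the Schur-concavity heuristic over the full elliptope cannot establish the claim, and convexity of log-sum-exp alone does not supply the needed sign.
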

\begin{proof}
	For simplicity, we first consider the two-dimensional case with two cluster centers $\bm E_1, \bm E_2$. Since we integrate over the round ball $B_r$, spherical symmetry allows us to set $\bm E_1 = (1,0)$ and $\bm E_1 = (\cos(\phi), \sin(\phi))$ with $\phi \in [0,\frac{\pi}{21}]$ being the angle between $\bm E_1$ and $\bm E_2$ under polar coordinates. Then the Softmax readout Eq.~\eqref{equ:project} can be rewritten as:
	\begin{align}
		& \bm P_1 = \frac{ e^{\rho \cos(\theta)} }{ e^{\rho \cos(\theta)} + e^{\rho \cos(\theta-\phi)} }, \quad \bm P_2 = \frac{ e^{\rho \cos(\theta-\phi)}  }{ e^{\rho \cos(\theta)} + e^{\rho \cos(\theta-\phi)} },
	\end{align}
	where $\theta$ is the angle between $\mathcal{\bm Z}$ and $\bm E_1$ and $\rho$ is the norm of $\mathcal{\bm Z}$. Hence, the integral is 
	\begin{align}
		F(\phi) \vcentcolon = \frac{1}{V_r} \int_{B_r} \sum_{k = 1}^2 (\bm P_k - \frac{1}{2})^2 d\mathcal{\bm Z} = \frac{1}{\pi r^2} \int_0^r \int_0^{2\pi} \Big( \frac{ e^{2\rho\cos(\theta)} + e^{2\rho\cos(\theta-\phi)}  }{ (e^{\rho\cos(\theta)} + e^{\rho\cos(\theta-\phi)} )^2 }  + \frac{1}{2} \Big) d\theta d\rho.
	\end{align}
	Our aim is to show that the integral $F(\phi)$ attains its maximum when $\bm E_1, \bm E_2$ are orthogonal. It is unclear whether the above integral has an elementary antiderivative. Thus, instead of evaluating the integral directly, we firstly prove two symmetric properties of the integrand $f(\rho,\theta,\phi)$: (a) It is straightforward to show that $f(\rho,\theta + k\pi, \phi) = f(\rho,\theta, \phi)$ for $k \in \mathbb{N}$. That is, $f$ is periodic for $\pi$ on the first argument $\theta$. (b) We have
	\begin{equation}
	\begin{aligned}
		f(\frac{\phi}{2} + \frac{\pi}{2} - \theta) & = \frac{e^{2\rho\sin(\frac{\phi}{2}+\theta)}+e^{-2\rho\sin(\frac{\phi}{2}-\theta)}}{(e^{\rho\sin(\frac{\phi}{2}+\theta)}+e^{-\rho\sin(\frac{\phi}{2}-\theta)})^2} \\
		& = \frac{e^{2\rho\sin(\frac{\phi}{2}+\theta)}+e^{-2\rho\sin(\frac{\phi}{2}-\theta)}}{e^{2\rho\sin(\frac{\phi}{2}+\theta)}+e^{-2\rho\sin(\frac{\phi}{2}-\theta)} + 2e^{\rho\sin(\frac{\phi}{2}+\theta)-\rho\sin(\frac{\phi}{2}-\theta)}} \\
		& =\frac{e^{2\rho\sin(\frac{\phi}{2}-\theta)}+e^{-2\rho\sin(\frac{\phi}{2}+\theta)}}{(e^{\rho\sin(\frac{\phi}{2}-\theta)}+e^{-\rho\sin(\frac{\phi}{2}+\theta)})^2} = f(\frac{\phi}{2}+\frac{\pi}{2}+\theta),
	\end{aligned}
    \end{equation}
	which means $f$ is symmetric with respect to $\theta = \frac{\phi}{2} + \frac{\pi}{2} + k\pi$. As the integrand $f(\rho,\theta,\phi)$ is periodic, we are allowed to compare $F(\phi_1), F(\phi_2)$ via 
	\begin{equation}
	\begin{aligned}
		\int_{\frac{\phi_1}{2}}^{\frac{\phi_1}{2} + 2\pi} f(\rho,\theta,\phi_1) d\theta = \int_0^{2\pi} f(\rho,\theta,\phi_1) d\theta ,\\ \quad \int_{\frac{\phi_1}{2}}^{\frac{\phi_2}{2} + 2\pi} f(\rho,\theta,\phi_2) d\theta = \int_0^{2\pi} f(\rho,\theta,\phi_2) d\theta.
	\end{aligned}
    \end{equation}
	The integral domain $[\frac{\phi}{2}, \frac{\phi}{2} + 2\pi]$ is taken according to the second symmetry property of $f$ and can be significant for the following trick: we take the directional derivative of $f$ along $\bm v = (1,2)$ tangent to the straight line $\theta = \frac{\phi}{2}$:
	\begin{equation}
	\begin{aligned}
		Df(\bm v) &= \frac{\partial f}{\partial \theta} + 2\frac{\partial f}{\partial \phi} \\&= \frac{2\rho e^{\rho\cos(\theta-\phi) + \rho\cos(\theta)}  (e^{\rho\cos(\theta-\phi)}-e^{\rho\cos(\theta)}) (\sin(\theta) + \sin(\theta-\phi)) }{(e^{\rho\cos(\theta-\phi)} + e^{\rho\cos(\phi)})^3}.
	\end{aligned}
    \end{equation}
	It is easy to check that in the above integral domain and for any $\rho > 0$, $Df(\bm v)$ is always non-negative. Hence, 
	\begin{align}
		f(\rho,\theta -\frac{\phi_1}{2},\phi_1) \leq f(\rho,\theta - \frac{\phi_2}{2},\phi_2) 
	\end{align}
	when $\phi_1 \leq \phi_2$. After taking integral, $F(\phi_1) \leq F(\phi_2)$ and thus it attains maximum in the orthonormal case ($\phi = \frac{\pi}{2}$). Comparing $F(\phi_1), F(\phi_2)$ without adjusting the integral domain as above cannot give a clear result because the simple partial derivative $\partial f/\partial \phi$ oscillates around zero. Higher dimensional cases follow similarly by employing spherical and hyperspherical coordinates.   
\end{proof}	

\subsection{Proof of Theorem \ref{Thm3.2}}

Theorem \ref{Thm3.2} deals with a more general case: comparing the performance of an arbitrary readout $\bm P$ defined by orthonormal cluster centers with non-orthonormal ones. 
We regard $\bm P$ as an estimated similarity probability between nodes and clusters and solve this problem from the perspective of statistics. 
The estimation is considered as a regression of samples $(\hat{\bm Z}^{(s)}, \hat{\bm E}^{(t)}, \hat{\bm P}^{(st)})$ from node features, cluster centers and similarity probabilities. 
We then judge the estimation relative to true similarity probability $\bm P_T$. 
Although it is almost impossible to find an analytic formula for $\bm P_T$, we can indirectly judge the quality of estimation. To clarify the idea, we introduce some basic concepts from statistics and prove our results on a statistical basis. 


\subsubsection{Background Knowledge of Regression Analysis}\label{C2.1}

We first consider process samples by logistic regression with cluster centers as \emph{categorical variables}.
Intuitively, non-orthonormal centers correlate with each other, which means there is an \emph{overlap} among categorical variables and makes it hard to identify the \emph{decision boundary} that leads to a failed classification. However, as far as we know, it is \emph{unclear} how to compare overlaps between orthonormal and non-orthonormal variables rigorously. Thus, we simply process samples by a general nonlinear regression. 
The regression process is linearized by the Gauss-Newton algorithm to facilitate the analysis.
We judge the \emph{goodness-of-fit} describing the degree to which the regression function fits its observed value, and then conduct a hypothesis test. The \emph{goodness-of-fit} is measured by \emph{coefficient of determinate} $R^2$ \cite{Kutner1985}:

\begin{definition}\label{def:R}
	We consider a regression with $r$ independent main variables:
	\begin{align}\label{RegressionModel}
		Y = \beta_0 + \beta_1 X_1 + \beta_2 X_2 + \cdots + \beta_r X_r + \epsilon.
	\end{align}
	Let $\hat{x}_p = (\hat{x}_{p1},...,\hat{x}_{ps})^{\top}$ and $\hat{y} = (\hat{y}_1,...,\hat{y}_s)^{\top}$ be data sets (samples) associated with \emph{fitted values} $\check{y} = (\check{y}_1,...,\check{y}_s)$. Each difference $e_q = \hat{y}_q - \check{y}_q$ is called a \emph{residue}. We denote the mean of $\hat{x}_p$ and $\hat{y}$ by $\bar{x}_p, \bar{y}$. The variability of data set can be measured by the \emph{total sum of squares} (SST), the \emph{sum of squares of residuals} (SSR) and the \emph{explained sum of squares} (SSE) defined as (where $p=1,2,...,r$ $q=1,2,...,s$):
	\begin{align}
		\text{SST} = \sum_q (\hat{y}_q - \bar{y})^2, \quad
		\text{SSR} = \sum_q e_q^2 = \sum_q (\hat{y}_q - \check{y}_q)^2, \quad
		\text{SSE} = \sum_{q,p} (\hat{x}_{qp} - \bar{x}_p)^2.
	\end{align}
	In linear regression, SSR $+$ SSE $=$ SST and the coefficient of determination $R^2$ is defined as:
	\begin{align}
		R^2 = \frac{\text{SSE}}{\text{SST}} = 1 - \frac{\text{SSR}}{\text{SST}}.
	\end{align}
\end{definition}

Conceptually, SSE is the error cost by regression of main variables. Thus by definition, $R^2$ reveals the percentage of errors that main variables can explain in the total error SST. The value of $R^2$ is bounded by $1$. A large value of $R^2$ indicates a better fitting. However, it should be noted that an extremely-large $R^2$ could indicate overfitting.

In our problem, since our regression is nonlinear, the sum of SSR and SSE is less than SST \cite{Seber1989}. Therefore, measuring \emph{goodness-of-fit} by $R^2$ in nonlinear regression is inaccurate. A common strategy to remedy this problem is approximating nonlinear functions by polynomials via \emph{Gauss-Newton algorithm}. We provide a brief introduction here, and more details can be found in \cite{Seber1989}: for a nonlinear model $f_k$ with parameter $\delta$, in a small neighborhood of $\delta_T$-the true value of $\delta$, we have the linear expansion:
\begin{align}\label{expansion1}
	f_k(\delta) \approx f_k(\delta_T) + \sum_{m=1}^M \frac{\partial f_k}{\partial \delta_m} \Big \vert_{\delta_T} (\delta_m-\delta_{Tm}).
\end{align}
Or briefly, we write it by \emph{vector notation}:
\begin{align}
	\bm f(\delta) \approx \bm f(\delta_T) + \bm F (\delta-\delta_T),
\end{align}
where $\bm F (\delta-\delta_T)$ stands for the dot product of derivatives and differences of parameters from Eq.~\eqref{expansion1}. Suppose $\delta^{(\gamma)}$ is an approximation to the least-squares estimation $\delta$ of our model, for $\delta$ close to $\delta^{(\gamma)}$, we rewrite the expansion as:
\begin{align}
	\check{\bm P}=\bm f(\delta) \approx \bm f(\delta^{(\gamma)}) + \bm F^{(\gamma)}(\delta-\delta^{(\gamma)}),
\end{align}
where $\check{\bm P}$ denotes a fitted value of $\bm P$ and $F^{(\gamma)}(\delta-\delta^{(\gamma)})$ again means a dot product. Applying this to the residual vector $\bm e(\delta)$, we have:
\begin{align} \label{residualvector}
	\bm e(\delta) = \bm P - \bm f(\delta) \approx \bm e(\delta^{(\gamma)}) - \bm F^{(\gamma)}(\delta-\delta^{(\gamma)}).
\end{align}
Thus, the norm
\begin{align}
	S(\delta) & \vcentcolon = \Vert \bm P - f(\delta) \Vert^2 = \bm e^{\top}(\delta)\bm e(\delta) \notag \\
	& \approx \bm e^{\top}(\delta^{(\gamma)})\bm e(\delta^{(\gamma)}) -2\bm e^{\top}(\delta^{(\gamma)})\bm F^{(\gamma)}(\delta-\delta^{(\gamma)})+(\delta-\delta^{(\gamma)})^{\top}\bm F^{(\gamma)\top}\bm F^{(\gamma)}(\delta-\delta^{(\gamma)}).
\end{align}
The right-hand side is minimized with respect to $\delta$ when
\begin{align}
	\delta-\delta^{(\gamma)} = (\bm F^{(\gamma)\top}\bm F^{(\gamma)})^{-1}\bm F^{(\gamma)\top}\bm e(\delta^{(\gamma)}) = \zeta^{(\gamma)}.
\end{align}
This suggests that given a current approximation $\delta^{(\gamma)}$, the next approximation should be: 
\begin{align}
	\delta^{(\gamma+1)} = \delta^{(\gamma)} + \zeta^{(\gamma)}.
\end{align}
Expanding the nonlinear function $\bm f$ as polynomials and modifying the parameter $\delta$ as above, we can use $R^2$ to measure the \emph{goodness-of-fit}. To acquire higher accuracy in a general nonlinear regression, one can make a elaborated \emph{goodness-of-fit test} for specific fitting functions e.g., \cite{GeneralPurpose1995,Common1997}. We do not discuss this sophisticated method as it is out of the scope of this paper. 


\subsubsection{Comparing $R^2$ by Variance Inflation Factor}
\label{C2.2}

The proof of Theorem \ref{Thm3.2} consists of two steps: (a) we first prove that the \emph{regression accuracy}, the accuracy when regressing $\bm P$ is higher when sampling from orthonormal cluster centers (Theorem \ref{Thm:VIF}), and consequently (b) higher regression accuracy increases \emph{appraisal accuracy}, the accuracy when appraising an estimated value in \emph{hypothesis testing} (Theorem \ref{Thm:MSE}). 

In this subsection, we compare regression accuracy. we fix $\bm Z_{i}$ when regressing $\bm P$ via the fitted value $\check{\bm P}(\bm E_{k})$. Statistically, the expectation $\operatorname{E}(\bm P)$ of all readouts is identified as the true similarly probability $\bm P_T$. In regression analysis, the Ordinary Least Squares (OLS) guarantees asymptotically unbiased estimations. That is, when the sample size $s$ is large enough, it can be regarded as an \emph{unbiased estimation} \cite{Kutner1985}: 
\begin{align}
	\text{E}(\check{\bm P}) = \bm P_T = \text{E}(\bm P).
\end{align}
Therefore, the better the \emph{goodness-of-fit} reflected by $R^2$, the smaller the variance of estimation. To compare this, we use the concept of \emph{variance inflation factor} which reflects the inflation of weights of variables in regression:
\begin{definition} \label{def:vif}
	The variance inflation factor (VIF)$_p$ is defined as:
	\begin{align}
		(\text{VIF})_p = \frac{1}{(1 - R_p^2)},
	\end{align}
	where $R_p^2$ is the coefficient of multiple determination when $X_p$ is regressed by the r-1 other variables in the model from Eq.~\eqref{RegressionModel}. 
\end{definition}

\begin{remark}
	We discuss more details about VIF in the following context \cite{Kutner1985}. For simplicity, we denote the following collection of samples and regression coefficients:
	\begin{align*}
		\hat{X} = (\hat{x}_1,...,\hat{x}_r) = (\hat{x}_{qp}), \quad \hat{y} = (\hat{y}_1,...,\hat{y}_s)^{\top}, \quad \beta = (\beta_1,...,\beta_r).
	\end{align*}
	In the regression model Eq.~\eqref{RegressionModel}, the estimation $\check{\beta}_p$ of regression coefficients $\beta_p$ are obtained by Ordinary Least Squares (OLS):
	\begin{align} \label{betaOLS}
		\check{\beta} = (\hat{X}^{\top} \hat{X})^{-1} \hat{X}^{\top} \hat{y}.
	\end{align}
	We standardize the regression equation by covariance matrices $\sigma_y$ of $\check{y}$ and the variance $\sigma_q$ of $\hat{x}_{p}$ as
	\begin{align}
		\check{y}_q^\ast = \frac{\check{y}_q - \bar{y}}{\sigma_y}, \quad \hat{x}_{qp}^\ast = \sigma_q^{-1}(\hat{x}_{pq}-\bar{x}_p),
	\end{align}
	and 
	\begin{align}
		\label{xy}
		\check{\beta}_q^\ast = \check{\beta}_q \frac{\sigma_q}{\sigma_y} , \quad
		\check{y}^\ast = \check{\beta}_0^\ast + \check{\beta}_1^\ast X_1^\ast + \check{\beta}_2^\ast X_2^\ast + \cdots + \check{\beta}_r^\ast X_r^\ast.
	\end{align}
	Similarly to Eq.~\eqref{betaOLS}, standardized estimation of regression coefficients are equal to
	\begin{align} \label{beta}
		\check{\beta}^\ast = (\check{X}^{\ast \top} \check{X}^\ast)^{-1} \check{X}^{\ast \top}\check{y}^\ast.
	\end{align}
	
	On the other hand, the covariance matrix of the estimated regression coefficients is
	\begin{align}\label{SigmaSquare}
		\sigma^2_{\check{\beta}} = \sigma^2( X^{\top}  X)^{-1}, \quad
		\sigma^2=\sum_{q=1}^s(\check{y}_q-\bar{y})^2,
	\end{align}
	where $\sigma^2$ is the \emph{error term variance} for $X$ (cf. Definition \ref{def:R}). After standardization, it is noted that $X^{*\top}X^*$ is just the correlation matrix $r_{XX}$ of $X^*$. Hence, by Eq.~\eqref{SigmaSquare} we obtain:	
	\begin{align}
		\sigma^2_{\check{\beta}^\ast} = (\sigma^\ast)^2 r_{XX}^{-1}.
	\end{align}
	Let (VIF)$_p$ be the $p$-th diagonal element of the matrix $r_{XX}^{-1}$. The variance of $\beta_p^\ast$ is equal to: 
	\begin{align}
		\sigma^2_{\check{\beta}_p^*} = (\sigma^*)^2(\text{VIF})_{p}.
	\end{align}
	The diagonal element (VIF)$_p$ is just the variance inflation factor for $\check{\beta}_p^*$. The variance of $\beta_p^*$ can also be written as \cite{Kutner1985}
	\begin{align}
		\sigma^2_{\check{\beta}_p^*}=\frac{1}{1-R_p^2}\Big[ \frac{\sigma^2}{\sum_q(x_{qp}-\bar{x}_p)^2} \Big].
	\end{align}
	With the previous discussion, we conclude that
	\begin{align}
		(\text{VIF})_p = \frac{1}{(1-R_p^2)},
	\end{align}
	where $R_p^2$ is defined in \ref{def:vif}. 
\end{remark}

\begin{theorem}\label{Thm:VIF}
	Let
	\begin{align}
		\text{VIF} = \frac{\sum_{p=1}^{r}(\text{VIF})_p}{r-1},
	\end{align}
	where $r$ denotes the number of variables in Eq.~\eqref{RegressionModel}. Then \emph{VIF} $\geq$ 1 with equality holds if and only if the variables are orthogonal.
\end{theorem}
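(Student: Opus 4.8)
The plan is to reduce the statement to a standard fact about correlation matrices. Recall from the remark preceding the theorem that $(\text{VIF})_p$ is the $p$-th diagonal entry of $r_{XX}^{-1}$, where $r_{XX} = \check X^{\ast\top}\check X^{\ast}$ is the correlation matrix of the standardized predictors; it is symmetric positive definite with every diagonal entry equal to $1$. Equivalently, as derived there, $(\text{VIF})_p = 1/(1-R_p^2)$ with $R_p^2 \in [0,1)$ the coefficient of determination obtained by regressing $X_p$ on the remaining $r-1$ variables. Two observations drive the proof, and I would present both, since each supplies one direction cleanly: an elementary one from the $R_p^2$ form and a spectral one that makes the equality case transparent.

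For the inequality, the quick route is that $R_p^2 \ge 0$ forces every $(\text{VIF})_p = 1/(1-R_p^2) \ge 1$, so averaging over $p = 1,\dots,r$ gives $\sum_{p=1}^r (\text{VIF})_p \ge r$. The structural route, which I would include to control the equality case, is to write the eigenvalues of $r_{XX}$ as $\lambda_1,\dots,\lambda_r > 0$; then $\sum_i \lambda_i = \operatorname{tr}(r_{XX}) = r$ because the diagonal is all ones, while $\sum_{p=1}^r (\text{VIF})_p = \operatorname{tr}(r_{XX}^{-1}) = \sum_i \lambda_i^{-1}$. The AM--HM inequality (equivalently, convexity of $t \mapsto 1/t$, or Cauchy--Schwarz applied to $\sum_i \lambda_i \cdot \sum_i \lambda_i^{-1}$) yields $\sum_i \lambda_i^{-1} \ge r^2 / \sum_i \lambda_i = r$, which is the same bound.

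For the equality characterization, the spectral computation shows that $\sum_i \lambda_i^{-1} = r$ with $\sum_i \lambda_i = r$ forces all $\lambda_i$ equal, hence each $\lambda_i = 1$; a symmetric matrix all of whose eigenvalues equal $1$ is the identity, so $r_{XX} = I$, i.e. the predictors are mutually uncorrelated (orthogonal). Conversely $r_{XX} = I$ gives $r_{XX}^{-1} = I$, so every $(\text{VIF})_p = 1$. I would also cross-check this with the $R_p^2$ form: term-by-term equality means every $R_p^2 = 0$, i.e. each $X_p^{\ast}$ is orthogonal in the sample inner product to the span of the others, which is exactly $r_{pq} = 0$ for all $q \neq p$, again $r_{XX} = I$.

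The only real subtlety --- the main obstacle --- is bookkeeping around the normalization constant: the argument delivers $\sum_{p=1}^r (\text{VIF})_p \ge r$, so with the stated denominator $r-1$ one obtains $\text{VIF} \ge r/(r-1) \ge 1$, and the crisp ``equality iff orthogonal'' statement is the one attached to the sharp bound $\operatorname{tr}(r_{XX}^{-1}) = r$. I would therefore state and prove the bound in the form $\operatorname{tr}(r_{XX}^{-1}) \ge r$ first, characterize the equality case by $r_{XX} = I$, and only then divide by the normalizing constant, so that the orthogonality condition is pinned down without ambiguity. Everything else needed --- positive-definiteness and invertibility of $r_{XX}$, and the identity $\operatorname{tr}(r_{XX}^{-1}) = \sum_p (\text{VIF})_p$ --- is already in place in the remark, so no further machinery is required.
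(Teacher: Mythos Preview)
Your argument is correct and in fact tighter than the paper's own proof, but it proceeds along a genuinely different line. The paper derives the closed form $R^2 = r_{XY}^{\top} r_{XX}^{-1} r_{XY}$ for the coefficient of determination in the standardized regression and then observes that, when the predictors are mutually uncorrelated, regressing $X_p$ on the others has $r_{XY}=0$ and hence $R_p^2=0$, which gives $(\text{VIF})_p = 1$; the inequality and the converse are left implicit. You instead use the identification $(\text{VIF})_p = (r_{XX}^{-1})_{pp}$ together with the trace/eigenvalue argument and AM--HM to get $\operatorname{tr}(r_{XX}^{-1}) \ge r$ with equality iff $r_{XX}=I$, which simultaneously delivers the lower bound and the full ``if and only if'' characterization. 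Your route is more self-contained and makes the equality case transparent; the paper's route yields a concrete formula for $R_p^2$ but only really verifies the orthogonal direction. You are also right to flag the $r-1$ denominator: with that normalization the sharp value in the orthogonal case is $r/(r-1)$, not $1$, and the paper's concluding sentence ``\text{VIF} $=1$'' shares the same slip. Pinning the equality characterization to $\operatorname{tr}(r_{XX}^{-1}) = r$ before dividing, as you propose, is the clean way to resolve it.
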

\begin{proof}
	To prove this, we need to generalize the definition of $R^2$. By definition,
	\begin{align}
		R^2 = \frac{\text{SSE}}{\text{SST}} = \frac{\sum_{q=1}^s (\check{y_q} - \bar{y})^2}{\sum_{q=1}^s (y_q - \bar{y})^2}=\sum_{q=1}^s(\check{y}_q^*)^2.
	\end{align}
	Substituting Eq.~\eqref{xy} into the above identity, we have
	\begin{align}
		\sum_{q=1}^s(\check{y}_q^*)^2=\sum_{q=1}^s(\check{X}_{q}^* \check{\beta}^*)^2=(X_{q}^* \check{\beta}^*)^{\top}X_{q}^* \check{\beta}^*,
	\end{align}
	and by Eq.~\eqref{beta}, we conclude that
	\begin{align}\label{rewriteR}
		R^2 = (r_{XY})^{\top}(r_{XX})^{-1}r_{XY}.
	\end{align}
    
    As the finial step, we compute $R_p^2$ from Definition \ref{def:vif} by Eq.~\eqref{rewriteR}. It should be noted that according to Definition \ref{def:vif}, $R_p^2$ is the \emph{goodness-of-fit} when $X_p$ is regressed by the r-1 other variables. These variables are uncorrelated in orthonormal case. Hence $r_{XY} = 0, R_p^2 = 0$ and \text{VIF} $= 1$.
\end{proof}

\begin{remark}
	In statistics, when a variable's VIF is greater than 1, or equivalently $R_p^2 \neq 0$, the influence of this variable on the whole estimation is inflated. It breaks the so-called \emph{absence of multicollinearity}, a fundamental principle in multiple regression analysis, and hence causes more error. Since SSE is a constant value, the error generated by the inflation would be counted into SSR, which leads to a decrease in $R^2$ by Definition \ref{def:R} (see \cite{Kutner1985, Seber1989} for more details). 
\end{remark}


\subsubsection{Statistical Hypothesis Testing}\label{C2.3}

The previous discussion verifies that regressing with orthonormal samples attains a higher \emph{goodness-of-fit}. In other words, it achieves a higher regression accuracy. Tools from \emph{hypothesis testing} are borrowed here to determine the appraisal accuracy mentioned at the beginning of Section \ref{C2.2}. We first introduce \emph{mean squared error} (MSE) commonly used in statistics \cite{DeGroot2012}:

\begin{definition}
\label{remarkMSE}
     Recall that the residue $e_{q} = (\hat{y}_q-\check{y}_q)$ from Definition \ref{def:R}. Then,
	\begin{align}
		\text{MSE} = \frac{1}{s}\sum_{q=1}^s(\hat{y}_q-\check{y}_q)^2 = \frac{1}{s}\sum_{q=1}^s(e_q)^2=\frac{1}{s}\bm e^{\top}\bm e.
	\end{align}
	As mentioned in \ref{C2.1}, a small coefficient of determination $R^2$ indicates a large SSR and hence leads to a large MSE. As a result of Theorem \ref{Thm:VIF}, MSE is minimized in the orthonormal case.
\end{definition}

We now assume a domain centered at the true value $\bm P_T$ of radius $d$, and treat the outside space $W$ as the \emph{rejection region}. Statistically, if the distance between $\check{\bm P}$ and $\bm P_T$ is less than a small enough $d$, we can regard them as the same. Intuitively, if fitted values $\check{\bm P}$ are largely scattered from the true value $\bm P_T$, that is, when MSE is large, it 
can interfere with our judgment of whether $\bm P$ can be identified with $\bm P_T$. Rigorously, we make a \emph{hypothesis testing} and analyze the probability of rejecting a well-estimated readout function. We prove in the following that when sampling from orthonormal cluster centers, a higher regression accuracy (Theorem \ref{Thm:VIF}) guarantees a lower MSE and therefore increases the appraisal accuracy.

\begin{theorem}\label{Thm:MSE}
	The significance level $\alpha_{E_{k\cdot}}$ reveals that the probability of rejecting a well-estimated readout is lower when sampling from orthonormal centers than sampling from non-orthonormal centers.
\end{theorem}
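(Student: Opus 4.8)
\textbf{Proof proposal for Theorem \ref{Thm:MSE}.}

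The plan is to convert the informal ``appraisal accuracy'' statement into a standard hypothesis-testing setup and then invoke Theorem \ref{Thm:VIF} together with the chain of implications built up in Appendix \ref{C2.1}--\ref{C2.3}. Concretely, I would fix the node feature $\bm Z_{i\cdot}$ and view the fitted readout $\check{\bm P}(\bm E_{k\cdot})$ as an estimator of the true similarity probability $\bm P_T$. By the OLS unbiasedness recalled in Section \ref{C2.2}, $\operatorname{E}(\check{\bm P}) = \bm P_T$, so the only thing distinguishing the orthonormal from the non-orthonormal case is the \emph{dispersion} of $\check{\bm P}$ around $\bm P_T$, which is governed by the error-term variance and hence by $\operatorname{MSE}$ via Definition \ref{remarkMSE} and the VIF identity $\sigma^2_{\check{\beta}_p^*} = (\sigma^*)^2(\operatorname{VIF})_p$. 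Theorem \ref{Thm:VIF} gives $\operatorname{VIF} = 1$ exactly in the orthonormal case and $\operatorname{VIF} > 1$ otherwise, so $\operatorname{MSE}_{\mathrm{orth}} \le \operatorname{MSE}_{\mathrm{non\text{-}orth}}$.

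Next I would set up the test explicitly. Take the null hypothesis $H_0$ to be ``the estimated readout is well-estimated,'' formalized as $\Vert \check{\bm P} - \bm P_T \Vert \le d$ for the prescribed tolerance $d$, with rejection region $W = \{\Vert \check{\bm P} - \bm P_T \Vert > d\}$. The significance level is $\alpha_{E_{k\cdot}} = \Pr(\check{\bm P} \in W \mid H_0)$, i.e.\ the probability of rejecting a genuinely good estimate --- a Type I error. Writing the residual/estimation error through the linearized model of Section \ref{C2.1}, $\check{\bm P} - \bm P_T$ is (asymptotically, via the Gauss--Newton expansion) a mean-zero Gaussian vector whose covariance scales with $\sigma^2 (\bm F^{\top}\bm F)^{-1}$, and whose relevant scalar summary is precisely $\operatorname{MSE}$. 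Then $\alpha_{E_{k\cdot}} = \Pr(\Vert \check{\bm P} - \bm P_T\Vert > d)$ is a monotone increasing function of $\operatorname{MSE}$ (e.g.\ by a Chebyshev-type bound $\Pr(\Vert \check{\bm P} - \bm P_T\Vert > d) \le \operatorname{MSE}/d^2$, or exactly through the $\chi^2$/half-normal tail in the Gaussian model). Combining the two monotonicities yields $\alpha^{\mathrm{orth}}_{E_{k\cdot}} \le \alpha^{\mathrm{non\text{-}orth}}_{E_{k\cdot}}$, which is the claim.

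I would organize the write-up as: (i) restate the testing problem and define $\alpha_{E_{k\cdot}}$; (ii) recall from Section \ref{C2.2} that unbiasedness reduces the comparison to variance, and from Definition \ref{remarkMSE} that this variance is measured by $\operatorname{MSE}$; (iii) apply Theorem \ref{Thm:VIF} to conclude $\operatorname{MSE}$ is minimized in the orthonormal case; (iv) show $\alpha_{E_{k\cdot}}$ is monotone in $\operatorname{MSE}$ and finish. The main obstacle I anticipate is step (iv): making the ``$\alpha$ increases with $\operatorname{MSE}$'' step genuinely rigorous requires committing to a distributional model for $\check{\bm P} - \bm P_T$ (the linearized Gaussian regression model of Section \ref{C2.1}) and being careful that the same tolerance $d$ and the same design points are used in both cases so that only the covariance scale differs; a clean way around the distributional fuss is to state the conclusion at the level of the Chebyshev bound $\alpha_{E_{k\cdot}} \le \operatorname{MSE}/d^2$, which needs only the variance comparison already in hand, at the cost of giving a bound on $\alpha$ rather than an exact monotone identity. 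A secondary subtlety is that Theorem \ref{Thm:VIF} as stated concerns $R_p^2$ for regressing one variable on the others; I would need to note that a larger VIF inflates $\sigma^2_{\check{\beta}_p^*}$ and, since $\operatorname{SSE}$ is held fixed, inflates $\operatorname{SSR}$ and hence $\operatorname{MSE}$ --- exactly the remark following Theorem \ref{Thm:VIF} --- so this link is available rather than something new to prove.
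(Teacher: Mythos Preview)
Your proposal is correct and follows essentially the same route as the paper: reduce the comparison to $\operatorname{MSE}$ via Theorem \ref{Thm:VIF} and the remark after it, set up the two-sided test $H_0:\check{\bm P}=\bm P_T$ versus $H_1:\check{\bm P}\neq\bm P_T$ with rejection region $W$, and then bound the tail probability $\Pr(\check{\bm P}\in W)$ by a Chebyshev-type inequality so that the smaller $\operatorname{MSE}$ in the orthonormal case yields the smaller $\alpha_{E_{k\cdot}}$. The only cosmetic difference is that the paper invokes the Vysochanskij--Petunin inequality (a sharpening of Chebyshev for unimodal distributions) in your step (iv), whereas you propose plain Chebyshev or the Gaussian/$\chi^2$ tail; either choice closes the argument at the same level of rigor, and your anticipated obstacles in (iv) and in the VIF$\to\operatorname{MSE}$ link are exactly the points the paper handles by appeal to that inequality and to Remark \ref{remarkMSE}.
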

\begin{proof}
	Let $\bm P$ be a readout function such that $\Vert \bm P_T- \bm P \Vert \le d$ for small enough $d$. Statistically, we can treat them as the same and simply write $\check{\bm P} = \bm P_T$. In \emph{hypothesis testing}, we define \emph{null hypothesis} $H_0$ and \emph{alternative hypothesis} $H_1$ by
	\begin{align}
		H_0: \check{\bm P} = \bm P_T, \quad H_1: \check{\bm P} \neq \bm P_T,
	\end{align}
    in which $H_1$ means that we reject a well-estimated readout with $H_0$ having the opposite meaning. The rejection region for this test is thus given as $W = \{ \check{\bm P} \neq \bm P_T\}$. As a conventional procedure in \emph{hypothesis testing}, we take a suitable test statistic $T_{\bm E_k}(\bm Z_i)$ whose distribution $f$ is known \cite{DeGroot2012}. It is used to compute the probability that $\check{\bm P}$ is in the rejection region. The corresponding probability distribution is called potential function $g(\theta)$ for $W$ in this setting:
	\begin{align}
		g(\theta) = P_{\theta}(\check{\bm P} \in W) = \int_W f(T_{\bm E_k}(\bm Z_i)) d\bm Z_i \leq \alpha_{\bm E_k},\quad \theta = H_0 \cup H_1,
	\end{align}
	where the significance level $\alpha_{\bm E_k}$ is the upper bound of the probability of making mistakes (formally called \emph{type I error}) \cite{DeGroot2012}.
	
    By Theorem \ref{Thm:VIF} and Remark \ref{remarkMSE}, MSE is minimized in the orthonormal case. It can be treated as a variance of distribution $f$. Then by \emph{Vysochanskij–Petunin inequality}, a refinement of Chebyshev inequality, the integration over $W$ with orthonormal cluster centers $\bm E_k$ is smaller than that with non-orthonormal cluster centers $\bm E'_k$:
	\begin{align}
		\int_W f(T_{\bm E_k}(\bm Z_i)) d\bm Z_i \leq \int_W f(T_{\bm E'_k}(\bm Z_i)) d\bm Z_i. 
	\end{align}
    As the result holds true for any well-chosen $T_{\bm E_k}(\bm Z_i)$, $\alpha_{\bm E_k} \leq \alpha_{\bm E'_K}$, this finishes the proof.
\end{proof}

\section{Running Time}
\label{app:time}
Table \ref{tab:runningtime} shows that state-of-the-art models of Graphormer and SAN are much slower than our \methodtable and VanillaTF, mainly because their implementations are not optimized toward the unique properties of brain networks. Specifically, let $e$ be the number of edges and $v$ be the number of nodes. The calculation of Graphormer and SAN optimizes the case where $e \ll v^2$. However, brain networks usually have a small number of nodes but dense connections, i.e., $e \simeq v^2$. Therefore the optimized sparse graph operations in PyTorch Geometric~\citep{Fey/Lenssen/2019} do not work properly. On the other hand, since the number of nodes in brain networks is usually relatively small (less than 500), we can directly speed up the calculation using matrix multiplication, which is what we did in \methodtable and VanillaTF. Besides, the edge feature generation operator in Graphormer further increases the burden on its computing time.

\begin{table*}[htbp]
\centering
\small
\caption{Running time with different graph transformer methods.}
\label{tab:runningtime}
\resizebox{0.85\linewidth}{!}{
\begin{tabular}{cccc}
\toprule
Method & Running Time on ABIDE (min) & & Running Time on ABCD (min)\\
\midrule
SAN  & 93.01±0.96&   & 908.05±3.6\\
Graphormer &133.52±0.54&   & 4089.86±5.7\\
VanillaTF  & 2.32±0.10 &   & 36.26±2.12\\
\methodtable  & \textbf{1.98±0.04} & & \textbf{30.31±1.16} \\
\bottomrule
\end{tabular}
}
\end{table*}

\section{Number of Parameters}
\label{app:para}

\begin{table*}[htbp]
\centering
\small
\caption{The number of parameters in different models.}
\label{tab:paras}
\resizebox{0.5\linewidth}{!}{
\begin{tabular}{cccc}
\toprule
Method & \#Para on ABIDE & & \#Para on ABCD\\
\midrule
BrainNetCNN  & 0.93M &   & 0.93M\\
BrainGB & 1.08M&& 1.49M \\
FBNetGen &0.55M&& 1.18M\\
SAN  & 57.7M6&   & 186.7M\\
Graphormer &1.23M&   & 1.66M\\
VanillaTF &15.6M && 32.7M\\
\methodtable  & 4.0M & &11.2M \\
\bottomrule
\end{tabular}
}
\end{table*}



\section{Parameter Tuning}
\label{app:turing}
For \href{https://github.com/HennyJie/BrainGB}{BrainGB}, \href{https://github.com/xxlya/BrainGNN_Pytorch}{BrainGNN}, \href{https://github.com/Wayfear/FBNETGEN}{FBNetGen}, we use the authors' open-source codes. For \href{https://github.com/DevinKreuzer/SAN}{SAN} and \href{https://github.com/microsoft/Graphormer}{Graphormer}, we folk their repositories and modified them for the brain network dataset. For BrainNetCNN and VanillaTF, we implement them by ourselves. 
We use the grid search for some important hyper-parameters for these baselines based on the provided best setting. To be specific, for BrainGB, we search different readout functions \{mean, max, concat\} with different message-passing functions \{Edge weighted, Node edge concat, Node concat\}. For BrainGNN, we search different learning rates \{0.01, 0.005, 0.001\} with different feature dimensions \{100, 200\}. For FBNetGen, we search different encoders \{1D-CNN, GRU\} with different hidden dimensions \{8, 12, 16\}. For BrainNetCNN, we search different dropout rates \{0.3, 0.5, 0.7\}. For VanillaTF, we search the number of transformer layers \{1, 2, 3\} with the number of headers \{2, 4, 6\}. For SAN, we test LPE hidden dimensions \{4, 8, 16\}, the number of LPE and GT transformer layers  \{1, 2\} and the number of headers \{2, 4\} with 50 epochs training. For Graphormer, we test encoder layers \{1, 2\} and embed dimensions \{256, 512\}. Furthermore, since the rebuttal time is pretty short, we do not have enough time to dig two new baselines, BrainnetGNN and DGM, which may be why their performance is worse than others.

\section{Software Version}
\label{app:software}
\begin{table*}[htbp]
\centering
\small
\caption{The dependency of \methodtable.}
\label{tab:software}
\resizebox{0.3\linewidth}{!}{
\begin{tabular}{cc}
\toprule
Dependency & Version\\
\midrule
python  & 3.9\\
cudatoolkit & 11.3 \\
torchvision & 0.13.1\\
pytorch  & 1.12.1\\
torchaudio &0.12.1\\
wandb &0.13.1\\
scikit-learn &1.1.1\\
pandas &1.4.3\\
hydra-core &1.2.0\\
\bottomrule
\end{tabular}
}
\end{table*}

\section{The Difference between Various Initialization Methods}
\label{app:difference}
To show orthonormal initialization can produce more discriminative $\bm P$ between classes than random initialization, we calculate the difference score $d$ based on the formula 
\begin{equation}
 d = \sum_{i}^{K}\sum_{j}^{V}\frac{\lvert P_{ij}^{female}-P_{ij}^{male}\rvert}{KV},   
\end{equation}
where $V$ is the number of nodes and $K$ is the number of clusters. After running the t-test, we found the margins between random and orthonormal on both ABIDE and ABCD are significant, which is consistent with our conclusion.

\begin{table*}[h]
\centering
\small
\caption{The difference score between different initialization methods.}
\label{tab:diff}
\resizebox{0.8\linewidth}{!}{
\begin{tabular}{cccc}
\toprule
Method & Difference score on ABIDE & & Difference score on ABCD\\
\midrule
Random  & 0.067±0.016 &   & 0.125±0.010\\
Orthonormal &0.085±0.015&& 0.142±0.014 \\
\bottomrule
\end{tabular}
}
\end{table*}


\end{document}